\newtheorem{thm}{Theorem}
\newtheorem{prop}{Proposition}
\newtheorem{cor}{Corollary}
\newtheorem{definition}{Definition}
\newtheorem{principle}{Core Principle}
\DeclareMathOperator{\bfa}{\mathbf{a}}
\newcommand{\ta}{\tilde{\alpha}}
\newcommand{\tb}{\tilde{\beta}}
\newcommand{\tc}{\tilde{\gamma}}
\newcommand{\meet}{\tilde{\wedge}}
\newcommand{\join}{\tilde{\vee}}
\begin{document}

\title{Bits and Pieces: Understanding Information Decomposition from Part-whole Relationships and Formal Logic}

\author{
A. J. Gutknecht $^{1,2}$, M. Wibral$^{1}$ and A. Makkeh$^{1}$}

\address{$^{1}$Campus Institute for Dynamics of Biological Networks, Georg-August University, Goettingen, Germany \\
$^{2}$MEG Unit, Brain Imaging Center, Goethe University, Frankfurt,  Germany}

\subject{Information Theory, Logic, Statistics, Applied Mathematics}

\keywords{parthood-relations, mereology, boolean logic, pointwise information theory, mutual information, partial information decomposition, multivariate statistical dependency, synergy, redundancy, unique information, redundant information, synergy-based PID, neural networks, neural information processing}

\corres{A. J. Gutknecht\\
\email{agutkne@uni-goettingen.de}}

\begin{abstract}
Partial information decomposition (PID)  seeks to decompose the multivariate mutual information that a set of source variables contains about a target variable into basic pieces, the so called "atoms of information". Each atom describes a distinct way in which the sources may contain information about the target. For instance, some information may be contained uniquely in a particular source, some information may be shared by multiple sources, and some information may only become accessible synergistically if multiple sources are combined. In this paper we show that the entire theory of PID can be derived, firstly, from considerations of part-whole relationships between information atoms and mutual information terms, and secondly, based on a hierarchy of logical constraints describing how a given information atom can be accessed. In this way, the idea of a partial information decomposition is developed on the basis of two of the most elementary relationships in nature: the part-whole relationship and the relation of logical implication. This unifying perspective provides insights into pressing questions in the field such as the possibility of constructing a PID based on concepts other than redundant information in the general n-sources case. Additionally, it admits of a particularly accessible exposition of PID theory. 
\end{abstract}


\maketitle


\noindent
\section{Introduction}
Partial information decomposition (PID) is an example of a rare class of problems where a deceptively simple question has perplexed researchers for many years, leading to heated disputes over possible solutions \cite{Schneidman2003}, simple but incomplete answers \cite{McGill1954}, and even to statements that the question should not be asked \cite{MackayBook}. The core question of PID is how the information carried by multiple source variables about a target variable is distributed over the source variables. In other words, it is the information theoretic question of 'who knows what about the target variable'. Intuitively, answering this question involves finding out which information we could get from multiple variables alike (called redundant or shared information), which information we could get only from specific variables, but not the others (called unique information), and which information we can only obtain when looking at some variables together (called synergistic information).

Examples of questions involving PID, are found in almost all fields of quantitative research. In neuroscience, for instance, we are interested in how the activity of multiple neurons, that were recorded in response to a stimulus, can provide information about (i.e. encode) the stimulus. Specifically, we are interested in whether the information provided by those neurons about the stimulus is provided redundantly, such that we can obtain it from many (or any) of the recorded neural responses, or whether certain aspects are only present uniquely in individual neurons, but not others; finally, we may find that we need to analyze all neural responses together to decode the stimulus - a case of synergy. All three ways of providing information about the stimulus may coexist and the aim of PID analysis is to determine to what degree each of them is present \cite{wibral2015bits}. 

In this way PID can be used as a framework for systematically testing and comparing theories of neural processing (such as predictive coding \cite{rao1999predictive} or coherent infomax \cite{kay2011coherent}) in terms of their information theoretic "footprint", ~i.e. in terms of the amounts of unique, redundant or synergistic information processing predicted by the theory. The key idea is to identify such theories with a specific information theoretic goal function (e.g. "maximize redundancy while at the same time allowing for a certain degree of unique information"). One may then investigate empirically whether a given neural circuit in fact maximizes the goal function in question or one may use the PID framework to come up with entirely new goal functions \cite{wibral2017partial}. 

The PID problem also arises in cryptography in the context of so called "secret sharing" \cite{Rauh2017}. The idea is that a multiple participants (the sources) each hold some partial information about a particular piece of information called the secret (the target). However, the secret can only be accessed if certain participants combine their information. In this context, PID describes how access to the secret is distributed over the participants.

The partial information decomposition framework has furthermore been used to to operationalize several core concepts in the study of complex and computational systems. These concepts include for instance the notion of information modification \cite{Lizier2013syn_info_mod, wibral2017modification} which has been suggested along with information storage and transfer as one of three fundamental component processes of distributed computation. It has also been proposed that the concepts of emergence and self-organisation can be made quantifiable within the PID framework \cite{Rosas2018self_organisation},\cite{Rosas2020}.

Despite the universality of the PID problem, solutions have only arisen very recently, and the work on consolidating and on distilling them into a coherent structure is still in progress. In this paper we aim to do so by rederiving the theory of partial information decomposition from the perspective of mereology (the study of parthood relations) and formal logic. The general structure of PID arrived at in this way is equivalent to the one originally described by Williams and Beer \cite{williams2010nonnegative}. However, our derivation has the advantage of tackling the problem directly from the perspective of the \textit{parts} into which the information carried by the sources about the target is decomposed, the so called "atoms of information". By contrast, the formulation used until now takes an indirect approach via the concept of redundant information. Furthermore, the approach described here is based on particularly elementary concepts: parthood between information contributions and logical implication between statements about source realizations. 

The remainder of this paper is structured as follows: First, in \S\ref{sec:parthood} we derive the general structure underlying partial information decomposition from considerations of elementary parthood relationships between information contributions. This structure is general in the sense that it still leaves open the possibility for multiple alternative measures of information decomposition. We show that the axioms underlying the formulation by Williams and Beer \cite{williams2010nonnegative, finn2018pointwise} can be proven within the framework described here. In \S\ref{sec:logic_isx} we utilize formal logic to derive a specific PID measure and in this way provide a complete solution to the information decomposition problem.  \S\ref{sec:logic} shows that there is an intriguing connection between formal logic and PID in that the mathematical lattice structure underlying information decomposition is isomorphic to a lattice of logical statements ordered by logical implication. This gives rise to a completely independent exposition of PID theory in terms of a hierarchy of logical constraints on how information about the target can be accessed. In \S\ref{sec:discussion_non_red_PID} we show that the ideas presented here can be utilized to systematically answer the question of whether a (full n-sources) PID can be induced by measures other than redundant information such as synergy or unique information. Before concluding in \S\ref{sec:conclusion}, we briefly address the important distinction between parthood relations and quantitative relations in \S\ref{sec:discussion_parthood_vs_quant}.

\section{The parthood perspective}\label{sec:parthood}
Suppose there are $n$ source variables $S_1,\ldots,S_n$ carrying some joint mutual information ${I(T:S_1,\ldots,S_n)}$ \cite{shannon1948mathematical,cover1999elements} about some target variable $T$ (see Figure \ref{fig:pid_problem_and_xor}, left). The goal of partial information decomposition is to decompose this joint mutual information into its component parts, the so called \textit{atoms} of information. As explained in the introduction, these parts are supposed to represent unique, redundant, and synergistic information contributions. Now, what distinguishes these contributions are their defining part-whole relationships to the information provided by the different source variables: the information uniquely associated with one of the sources in only part of the information provided by \textit{that} source and not part of the information provided by any other source. The information provided redundantly by multiple sources is part of the information carried by \textit{each} of these sources. And the information provided synergistically by multiple source is only part of the information carried by them jointly but not part of the information carried by any of them individually. For this reason, it seems natural to make the part-whole relationship between pieces of information the basic concept of PID. The goal of this section is to make this idea precise, and in this way, to open up a new perspective for thinking about partial information decomposition.

The underlying idea is that any theory should be put on the foundation of as simple and elementary concepts as possible. The part-whole relation is one of the most basic relationships in nature. It appears on all spatial and temporal scales: atoms are parts of molecules, planets are parts of solar systems, the phase of hyperpolarisation is part of an action potential, infancy is part of a human beings life. Moreover, it is not a purely scientific concept but is also ubiquitous in ordinary life: we say for instance, that a prime minister is part of the government or that a slice of pizza is part of the whole pizza. This ubiquity makes it particularly easy to think in terms of part-whole relationships. We hope, therefore, that starting from this vantage point will provide a particularly accessible and intuitive exposition of partial information decomposition. This factor is of particular importance when it comes to the practical application of PID to specific scientific questions and the interpretation of the results of a PID analysis.

Developing the theory of partial information decomposition means that we have to answer three questions:
\begin{enumerate}
\item What are atoms of the decomposition supposed to mean, i.e.~ what \textit{type} of information should they represent?
\item How many atoms are there for a given number of information sources?
\item How large are the different atoms of information given a specific joint probability distribution of sources and target? How many \textit{bits} of information does each atom provide?
\end{enumerate}

In the following sections we will tackle each of these questions in turn.

\begin{figure}[ht] 
\centering
\includegraphics[width=0.7\textwidth]{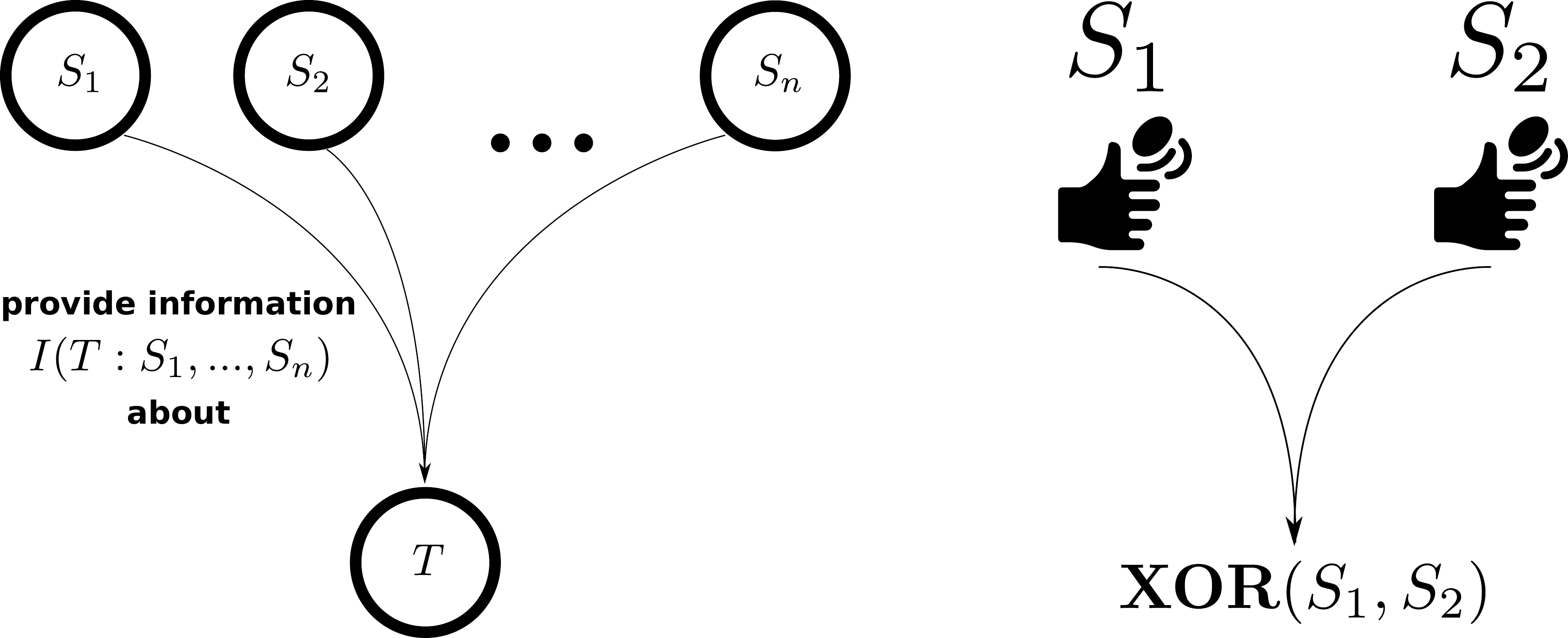}
\caption{Left: The general partial information decomposition problem is to decompose the joint mutual information provided by n source variables $S_1,\ldots,S_n$ about a target variables $T$ into its component parts. Right: Illustration of the exclusive-or example. The sources are two independent coin flips. The target is 0 just in case both coins come up heads or both come up tails. It is 1 if one of the coins is heads while the other is tails. Coin tossing icons made by Freepik, www.flaticon.com.}
\label{fig:pid_problem_and_xor}
\end{figure}

\subsection{What do the atoms of information mean?} \label{sec:meaning_of_atoms}
Asking how to decompose the joint mutual information into its components parts is a bit like asking "How to slice a cake?". Of course, there are many possible ways to do so, and hence, there is no unique answer to the question. In order to make the question more precise we first have to provide a criterion according to which we would like to decompose the joint mutual information. This is what this section is about. What are the atoms of information supposed to mean in the end, i.e. what \textit{type} of information do they represent?

To a first approximation, the core idea underlying the parthood approach to partial information decomposition is to decompose the joint mutual information $I(T:S_1,\ldots,S_n)$ into information atoms, such that each atom is characterized by its parthood relations to the mutual information provided by the different sources. For instance, one atom of information will describe that part of the joint mutual information which is part of the information provided by \textit{each} source, i.e. the information that is redundant to all sources. Another atom will describe the part of the joint mutual information that is only part of the information provided by the first source, i.e. it is unique to the first source. And so on.

Now, we have to refine this idea a bit: it is important to realize that it would not be enough to consider parthood relations to information provided by \textit{individual} sources. The reason is that a \textit{collection} of sources may provide some information that is not contained in any individual source but which only arises by \textit{combining} the information from multiple sources in that collection.  The classical example for this phenomenon is the logical exclusive-or shown in Figure \ref{fig:pid_problem_and_xor}, right. In this example the sources are two independent coin flips. The target is the exclusive-or of the sources, i.e. the target is 0 just in case both coins come up heads or both come up tails, and it is 1 otherwise. Initially, the odds for the target being zero or one respectively are 1:1 because there are four equally likely outcomes in two of which the target is 1 while it is 0 in the other two. Now, if we are told the value of one of the coins, these odds are not affected, and accordingly, we do not obtain any information about the target. For instance, if we are told that the first coin came up heads there are two equally likely outcomes left: Heads-Heads and Heads-Tails. In the first case, the target is zero and in the second case it is one. Hence, the odds are still 1:1. On the other hand, if we are told the value of \textit{both} coins, then we \textit{know} what the value of the target is. In other words, we obtain complete information about the target. 


There are two conclusions to be drawn from examples like this:
\begin{enumerate}
\item There are cases in which multiple information sources combined provide some information that is not contained in any individual source. This type of information is generally called \textit{synergistic information}.
\item Any reasonable theory of information should be compatible with the existence of synergistic information. In particular, it should allow that, in some cases, the information provided jointly by multiple sources is larger than the sum of the individual information contributions provided by the sources. 
\end{enumerate}
Regarding the second point we may note that classical information theory satisfies this constraint because in some cases
\begin{equation}
I(T:S_1,S_2) > I(T:S_1) + I(T:S_2)
\end{equation}
In fact, in the exclusive-or example, each individual source provides zero bits of information while the sources combined provide one bit of information.

Based on these consideration we may rephrase the basic idea of the parthood approach as: we are looking for a decomposition of the joint mutual information into atoms such that each atom is characterized by its parthood relations to the information carried by the different possible \textit{collections} of sources about the target. Of course, we allow collections containing only a single source, such as $\{1\}$, as a special case. Note that we will generally refer to source variables and collections thereof \textit{by their indices}. So instead of writing $\{S_1\}$ and $\{S_1,S_2\}$ to refer to the first source and the collection containing the first and second source, we write $\{1\}$ and $\{1,2\}$ respectively. There are several important technical reasons for this that will become apparent in the following sections. For now it is sufficient to just think of it as a shorthand notation.

Let's now investigate how the idea of characterizing the information atoms by parthood relations plays out in the simple case of two sources $S_1$ and $S_2$. In this case, there are four collections: 
\begin{enumerate}
\item The empty collection of sources $\{\}$
\item The collection containing only the first source $\{1\}$
\item The collection containing only the second source $\{2\}$
\item The collection containing both sources $\{1,2\}$
\end{enumerate}
Now, in order to characterize an information atom $\Pi$ we have to ask for each collection $\bfa$: Is $\Pi$ part of the information provided by $\bfa$? For two of the collections we can answer this question immediately for all $\Pi$: First, no atom of information should be contained in information provided by the empty collection of sources because there is no information in the empty set. If we do not know any source, then we cannot obtain any information from the sources. Second, any atom of information should be contained in the mutual information provided by the full set of sources since this is precisely what we want to decompose into its component parts. Regarding the collections $\{1\}$ and $\{2\}$ we are free to answer yes or no leaving four possibilities as shown in Table \ref{tab:2_sources_parthood_table}.
\begin{table}[ht] 
	\centering
\begin{tabular}{|c| c| c |c |c|} 
	\hline
	Part of & \{\} & \{1\} & \{2\} & \{1,2\} \\ [0.5ex] 
	\hline
	$\Pi_1$ (Synergy) & \textbf{0}  & 0  &0  & \textbf{1}  \\ 
	\hline
	$\Pi_2$ (Unique) & \textbf{0} & 1  & 0& \textbf{1} \\
	\hline
	$\Pi_3$ (Unique) & \textbf{0}  & 0 & 1 & \textbf{1} \\
	\hline
	$\Pi_4$ (Shared) & \textbf{0}& 1  &   1 & \textbf{1} \\
	\hline
\end{tabular}
\caption{Parthood table for the case of two information sources. Each row characterizes a particular atom of information in terms of its parthood relationships with the mutual information provided by the different collections of sources. The bold entries are enforced by the constraints that there is no information in the empty collection of sources and that any piece of information is part of the information carried by the full set of sources about the target.}
\label{tab:2_sources_parthood_table}
\end{table}

The first possibility (first row of Table \ref{tab:2_sources_parthood_table}) is an atom of information that is only part of the information provided by the sources jointly but not part of the information in either of the individual sources. This is the \textit{synergistic information}. The second possibility (second row) is an atom that is part of the information provided by the first source but which is not part of the information in the second source. This atom of information describes the \textit{unique information} of the first source. Similarly, the third possibility (third row) is an atom describing information uniquely contained in the second source. The fourth and last possibility (fourth row) is an atom that is part of the information provided by \textit{each} source. This is the information \textit{redundantly provided} or \textit{shared by}  the two sources.

So based on considerations of parthood we arrived at the conclusion that there should be exactly four atoms of information in the case of two source variables. Each atom is characterized by its parthood relations to the mutual information provided by the different collections of sources. These relationships are described by the rows of Table \ref{tab:2_sources_parthood_table} which we will call \textit{parthood distributions}. Each atom $\Pi$ is formally represented by its parthood distribution $f_{\Pi}$. 

Mathematically, a parthood distribution is a Boolean function from the powerset of $\{1,\ldots,n\}$ to $\{0,1\}$, i.e. it takes a collection of source indices as an input and returns either 0 (the atom described by the distribution is not part of information provided by the collection) or 1 (the atom described by the distribution is part of that information) as an output. But note that not all such functions qualify as a parthood distribution. We already saw that certain constraints have to be satisfied. For instance, the empty set of sources has to be mapped to 0. We propose that there are exactly three constraints a parthood distribution $f$ has to satisfy leading to the following definition
\begin{definition}
A parthood distribution is any function $f:\mathcal{P}\left(\{1,\ldots,n\}\right) \rightarrow \{0,1\}$ such that
\begin{enumerate}
\item $f(\{\}) = 0$ ("There is no information in the empty set")
\item $f(\{1,\ldots,n\}) = 1$ ("All information is in the full set")
\item For any two collections of source indices $\mathbf{a}$, $\mathbf{b}$: If $\mathbf{b} \supseteq \mathbf{a}$, then $ f(\mathbf{a}) = 1 \Rightarrow f(\mathbf{b}) = 1$ (Monotonicity)
\end{enumerate}
\end{definition} 
The third constraint says that if an atom of information is part of the information provided by some collection of sources $\mathbf{a}$, then it also has to be part of the information provided by any superset of this collection. For example, if an atom is part of the information in source 1, then it also has to be part of the information in sources 1 and 2 combined. Note that this monotonicity constraint only matters if there are more than two information sources. Otherwise it is implied by the first two constraints. To fix ideas, an example of a Boolean function that is \textit{not} a parthood distribution is shown in Table \ref{tab:example_of_non_parthood_distribution}. The function assigns a 1 to the collection $\{1\}$ but a 0 to collections $\{1,2\}$ and $\{1,3\}$ which are supercollections of $\{1\}$. Thus, there can be no atom of information with the parthood relations described by this Boolean function.

\begin{table}[ht]
\centering
\begin{tabular}{|c |c |c |c |c |c |c | c|c |} 
	\hline
	Part of & \{\} & \{1\} & \{2\} & \{3\} &  \{1,2\} & \{1,3\} &\{2,3\} & \{1,2,3\}\\ [0.5ex] 
	\hline
	& 0   & 1 & 0 & 0  & $\mathbf{0}$ & $\mathbf{0}$  & 0  & 1 \\ 
	\hline
\end{tabular}
\caption{Example of Boolean function that is not a parthood distribution. Bold entries violate the monotonicity constraint.}
\label{tab:example_of_non_parthood_distribution}
\end{table}

We may now answer the question about the meaning of the atoms of information, i.e. what \textit{type} of information they represent: They represent information that is part of the information provided by certain collections of sources but not part of the information of other collections. More precisely we can phrase this idea in terms of the following core principle:

\begin{principle}\label{prc:core_principle_1}
Each atom of information is characterized by a parthood distribution describing whether or not it is part of the information provided by the different possible collections of sources. The atom $\Pi(f)$ with parthood distribution f is exactly that part of the joint mutual information about the target which is 1) part of the information provided by all collections of sources $\mathbf{a}$ for which $f(\mathbf{a}) = 1$, and 2), which is not part of the information provided by collections for which $f(\mathbf{a}) = 0$.
\end{principle}

Given this characterization of the information atoms we are now in a position to answer the second question: How many atoms are there for a given number of information sources.

\subsection{How many atoms of information are there?} \label{subsec:parthood_number_of_atoms}
Since each atom is characterized by its parthood distribution, the answer is straightforward: there is one atom per parthood distribution, or in other words, one atom per Boolean function satisfying the constraints presented in the previous section. The monotonicity constraint turns out to be most restrictive. In fact, once the monotonicity constraint is satisfied the other two constraints only rule out one Boolean function each as shown in Table \ref{tab:constant_monotone_functions}. The reason is the following: Firstly, there is only a single \textit{monotonic} Boolean function that assigns the value 1 to the empty set, namely, the function that is always 1. Since the empty set is subset of any other set, monotonicity enforces to assign a 1 to all sets once the empty set has value 1. However, this possibility is ruled out by the first constraint saying that there is no information in the empty set. Secondly, there is only a single \textit{monotonic} Boolean function assigning the value 0 to the full set $\{1,\ldots,n\}$, namely the function that is always 0. Since any other set of source indices is contained in the full set, monotonicity forces us to assign a 0 to all sets once the full set has value 0. If we were to assign a 1 to any other set, then we would have to assign a 1 to the full set as well.

\begin{table}[ht]
\centering
\begin{tabular}{|c |c |c |c |c |c |} 
	\hline
	Part of & \{\} & \ldots & \ldots & \ldots  & \{1,\ldots,n\}\\ [0.5ex] 
	\hline
	& 1  &  $\mathbf{1}$ & $\mathbf{1}$ & $\mathbf{1}$  & $\mathbf{1}$  \\ 
	&   $\mathbf{0}$ &  $\mathbf{0}$  & $\mathbf{0}$  & $\mathbf{0}$   & 0  \\ 
	\hline
\end{tabular}
\caption{The two constant Boolean functions are ruled out by the first and second constraint on parthood distributions described above.}
\label{tab:constant_monotone_functions}
\end{table}

This means that the number of atoms is equal to \textit{the number of monotonic Boolean functions minus two}. Now the sequence of the numbers of monotonic Boolean functions of n-bits is a very famous sequence in combinatorics called the \textit{Dedekind numbers}. The Dedekind numbers are a very rapidly (in fact super-exponentially) growing sequence of numbers of which only the first eight entries are known to date \cite{stanley1997enumerative}. The values for $2\leq n \leq 6$ of the Dedekind numbers are: 6, 20, 168, 7581, 7828354.

Now that we have answered what type of information the different atoms represent and how many there are for a given number of information sources, there is one important question left: How large are these different atoms? How many \textit{bits} of information does each atom provide?


\subsection{How large are the atoms of information?} \label{sec:parthood_how_large}
The question of the sizes of the atoms is not a trivial one since the number of atoms grows so quickly. In the case of four information sources there are already 166 atoms. Hence, it does not appear to be feasible to define the amount of information of each of these atoms separately. What we need is a systematic approach that somehow fixes the sizes of all atoms at the same time. The core idea is to transform the problem into a much simpler one in which only a single type of informational quantity has to be defined. In the following we show how this can be achieved in three steps.

\subsubsection{Define a quantitative relationship between atoms and composite quantities}\label{sec:parthood_step_1}
So far we have only discussed how the atoms of information relate \textit{qualitatively} to composite information quantities that are made up of multiple atoms, in particular mutual information (in the next section we will encounter another non-atomic quantity). We saw for instance, that in the case of two sources, the mutual information contributions provided by the individual sources, $I(T:S_1)$ and $I(T:S_2)$, each consist of a unique and a redundant information atom, while the joint mutual information $I(T:S_1,S_2)$ additionally consists of a synergistic part. This is illustrated in the information diagram shown in Figure \ref{fig:two_sources_PID_illustration}.
 \begin{figure}[ht] 
	\centering
	\includegraphics[width = 0.4\textwidth]{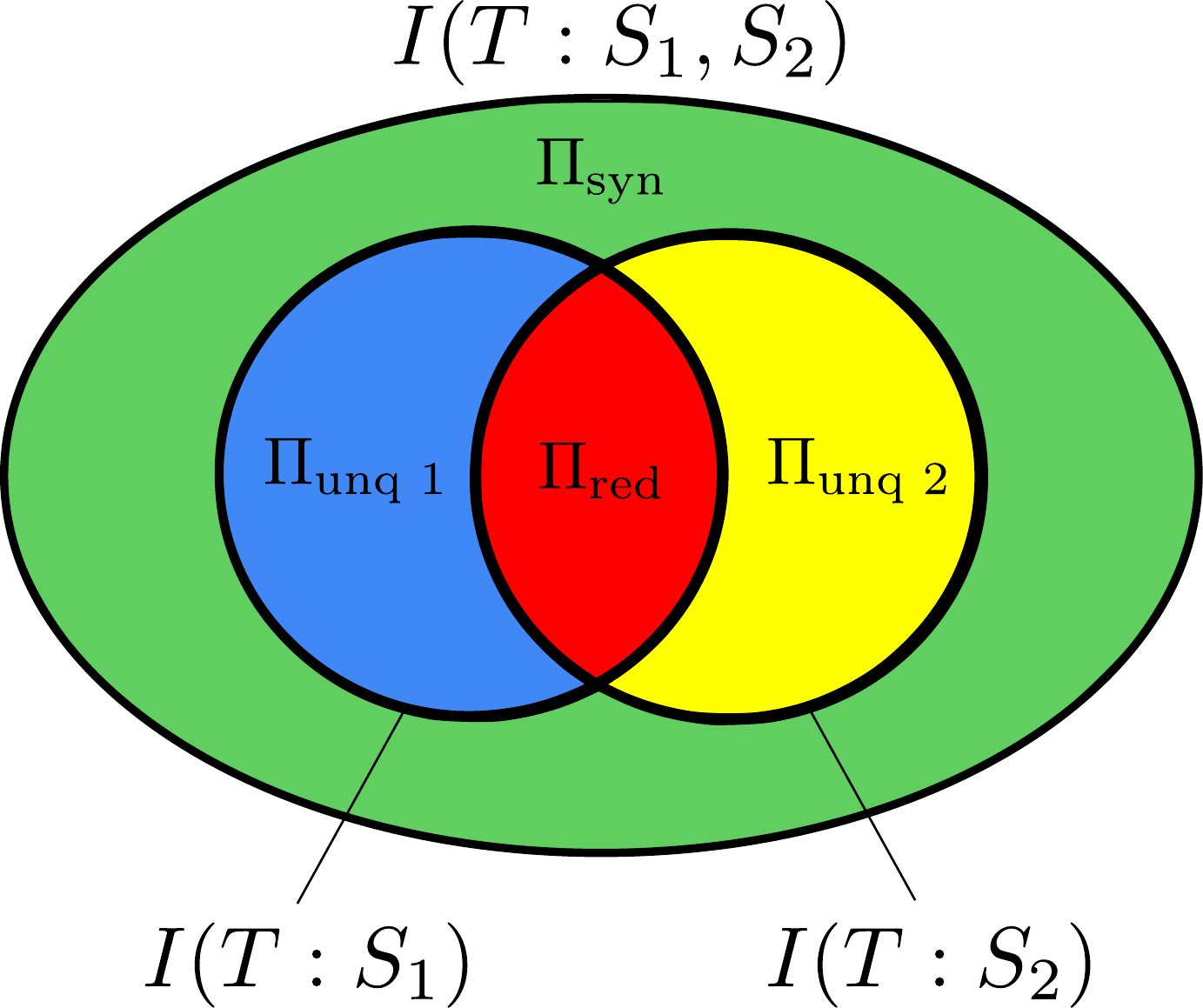}
	\caption{Information diagram depicting the partial information decomposition for the case of two information sources. The inner two black circles represent the mutual information provided by the first source (left) and the second source (right) about the target. Each of these mutual information terms contains two atomic parts: $I(T:S_1)$ consists of the unique information in source 1 ($\Pi_{\text{unq 1}}$, blue patch) and the information shared with source 2 ($\Pi_{\text{red}}$, red patch).  $I(T:S_2)$ consists of the unique information in source 2 ($\Pi_{\text{unq 2}}$, yellow patch) and again the shared information. The joint mutual information $I(T:S_1,S_2)$ is depicted by the large black oval encompassing the inner two circles. $I(T:S_1,S_2)$ consists of four atoms: The unique information in source 1 ($\Pi_{\text{unq 1}}$, blue patch), the unique information in source 2 ($\Pi_{\text{unq 2}}$, yellow patch), the shared information ($\Pi_{\text{red}}$, red patch), and additionally the synergistic information ($\Pi_{\text{syn}}$, green patch).}
	\label{fig:two_sources_PID_illustration}
\end{figure}

Now the question arises: How are these mutual information terms related to the atoms they consist of \textit{quantitatively}? The most straightforward answer (and the one generally accepted in the PID field) is that the mutual information is simply the \textit{sum} of the atoms it consists of. We propose to extend this principle to any composite information quantity, i.e. any quantity that can be described as being made up out of multiple information atoms:
\begin{principle}\label{prc:wholes_are_sums_of_parts}
The size of any non-atomic information quantity (i.e. the amount of information it contains) is the sum of the sizes of the information atoms it consists of.
\end{principle}
\noindent
We could also rephrase this as "wholes are the sums of their (atomic) parts". In the case of two information sources, this principle leads to the following three equations:
{\small
\begin{align}
&I(T:S_1, S_2) \! = \! \Pi_{\text{red}} \! + \! \Pi_{\text{unq 1}}\! + \! \Pi_{\text{unq 2}} \! + \! \Pi_{\text{syn}} \\ 
&I(T:S_1) \! =  \! \Pi_{\text{red}} \! + \! \Pi_{\text{unq 1}} \\ 
&I(T:S_2) \! = \!  \Pi_{\text{red}}\! + \! \Pi_{\text{unq 2}}
\end{align}
}

This already gets us quite far in terms of determining the sizes of the atoms: The sizes of the atoms are the solutions to a linear system of equations. The only problem is that the system is underdetermined. We have four unknowns but only three equations. In the case of three sources, the problem is even more severe. In this case, there are seven non-empty collections of sources, and hence, seven mutual information terms. Again each of these terms is the sum of certain atoms. But as shown in section \S\ref{sec:parthood}b there are 18 atoms. So we are short of 11 equations! 

In general the equations relating the mutual information provided by some collection of sources $\mathbf{a}$ and the information atoms can be expressed easily in terms of their parthood distributions:
\begin{equation}\label{eq:quant_relation_atoms_MI}
I(T:\mathbf{a}) = \sum\limits_{f(\mathbf{a})= 1} \Pi(f)
\end{equation}
where $\Pi(f)$ is the information atom corresponding to parthood distribution $f$ and the summation notation means that we are summing over all f such that $f(\mathbf{a})=1$. Note that on the left-hand-side we are using the shorthand notation $I(T:\mathbf{a})$ for the mutual information $I(T:(S_i)_{i\in \mathbf{a}})$ provided by the collection $\mathbf{a}$. Equation \eqref{eq:quant_relation_atoms_MI} can be taken to define a minimal notion of a partial information decomposition, i.e.~any set of quantities $\Pi(f)$ at least has to satisfy this equation in order to be considered a partial information decomposition (or at least to be considered a parthood-based / Williams and Beer type PID). For a formal definition of such a minimally consistent PID see Appendix \ref{app:min_cons_pid}.

This concludes the first step. The next one is to find a way to come up with the appropriate number of additional equations. In doing so we will follow the same approach as Williams and Beer and utilize the concept of \textit{redundant information} to introduce additional constraints. It should be noted that this is not the only way to derive a solution for the information atoms. In other words, a PID does not have to be "redundancy based". This issue is discussed in detail in \S\ref{sec:discussion_non_red_PID}. For now, however, let us follow the conventional path and see how it enables us to determine the sizes of the atoms of information.


\subsubsection{Formulate additional equations using the concept of redundant information} \label{sec:parthood_step_2}
The basic idea is now to extent the considerations of the previous step to another composite information quantity: the redundant information provided by multiple collections of sources about the target which we will generically denote by $I_\cap(T:\mathbf{a}_1,\ldots,\mathbf{a}_m)$. The $\cap$-symbol refers to the idea that the redundant information of collections $\mathbf{a}_1,\ldots,\mathbf{a}_m$ is the information contained in $\mathbf{a}_1$ \textit{and} $\mathbf{a}_2$ \textit{and}, \ldots, \textit{and} $\mathbf{a}_m$. Intuitively, given two collections of sources $\mathbf{a}_1$ and $\mathbf{a}_2$, their redundant information is the information ``shared'' by those collections, what they have "in common", or geometrically: their overlap. These informal ideas are illustrated on the left side in Figure \ref{fig:redundancy_illustration}. 

 \begin{figure}[ht] 
	\centering
	\includegraphics[width=0.75\textwidth]{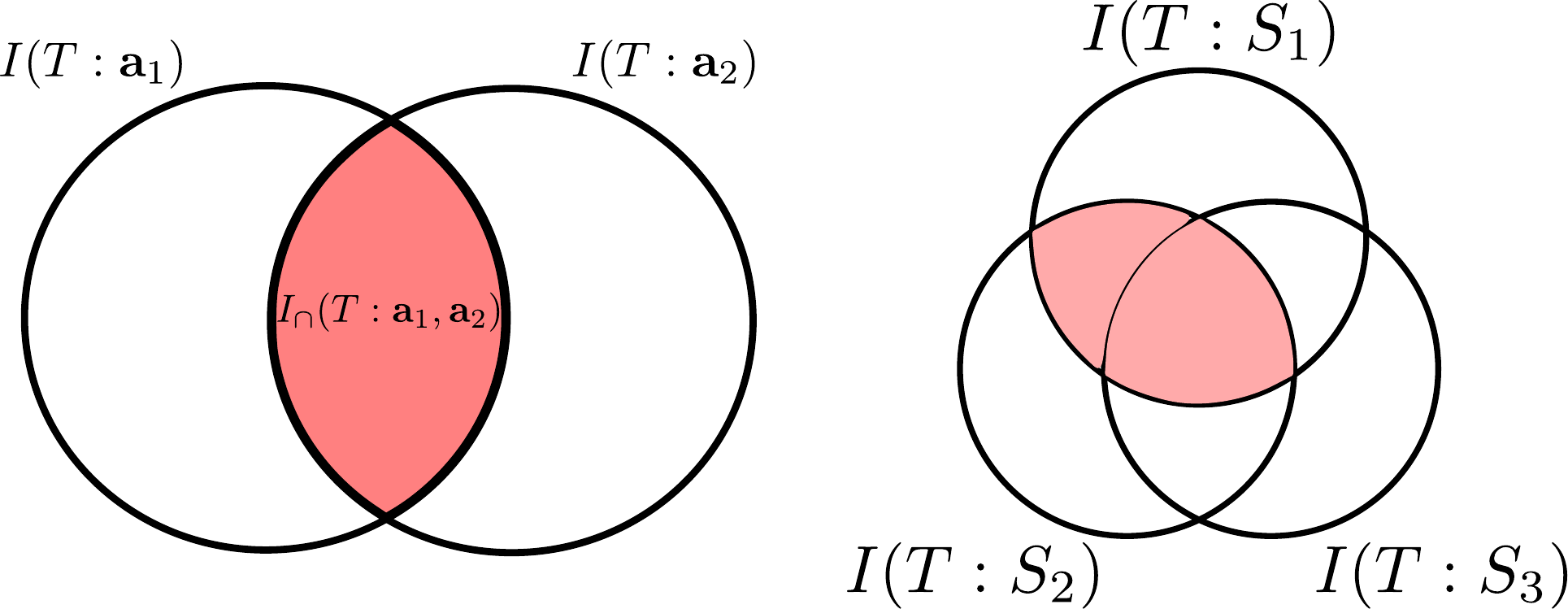}
	\caption{Left: Illustration of the idea of the redundant information of collections $\mathbf{a}_1$ and $\mathbf{a}_2.$ Right: Redundant information is generally not an atomic quantity. In the context of three information sources, the redundant information of sources 1 and 2 consists of two parts: the information shared by \textit{only} by sources 1 and 2, and the information shared by all three sources. }
	\label{fig:redundancy_illustration}
\end{figure}

Note that the redundant information of multiple collections of information sources is not defined in classical information theory. We have to come up with an appropriate measure of redundant information ourselves. However, the informal ideas just describes already tell us that redundant information, no matter how we define it, should be related qualitatively to the information atoms in a very specific way: the information redundantly provided by multiple collections of sources should consist of exactly those information atoms that are part of the information carried by \textit{all} of those collections:

\begin{principle}\label{prc:parthood_criterion_red}
The redundant information ${I_\cap(T:\mathbf{a}_1,\ldots,\mathbf{a}_m)}$ consists of all information atoms that are part of the information provided by \textit{each} $\mathbf{a}_i$, i.e. all atoms with a parthood distribution satisfying $f(\mathbf{a}_i)  = 1$ for all $i = 1,\ldots,m$. 
\end{principle}
Let's see what this principle implies in concrete examples. We saw that in the case of two sources, the redundant information of source 1 and source 2, $I(T:\{1\},\{2\})$, is actually itself an atom, namely the atom with the parthood distribution

\begin{table}[h] 
	\centering
	\begin{tabular}{| c| c |c |c|} 
		\hline
		\{\} & \{1\} & \{2\} & \{1,2\} \\ [0.5ex] 
		\hline
	 	0  & 1  & 1  & 1 \\  
		\hline
	\end{tabular}
\end{table}

This is the only atom that is part of both the information provided by the first source and also part of the information provided by the second source. But this is really a special case. Note what happens if we add a third source to the scenario. In this case the redundant information $I(T:\{1\},\{2\})$ of sources 1 and 2 should consist of \textit{two} parts: First, the information shared by \textit{all three} sources (which is certainly also shared by sources 1 and 2), and secondly, the information shared \textit{only} by sources 1 and 2 but not by source 3. This is illustrated on the right side in Figure \ref{fig:redundancy_illustration}. Note also that in the case of three sources there are actually \textit{many} redundancies that we may compute: 

\begin{enumerate}
\item the redundancy of all three sources ${I_\cap(T:\{1\},\{2\},\{3\})}$.
\item  the redundancy of any \textit{pair} of sources such as the redundancy of  $I_\cap(T:\{1\},\{2\})$.
\item the redundancy between a single source and a pair of sources such as  $I_\cap(T:\{1\},\{2,3\})$.
\item the redundancy between two pairs of sources such as $I_\cap(T:\{1,2\},\{2,3\})$.
\item the redundancy of all three possible pairs of sources  $I_\cap(T:\{1,2\},\{1,3\},\{2,3\})$.
\end{enumerate}
It turns out that in total there are 11 redundancies (strictly speaking we should say 11 "proper" redundancies as will be explained below). But this is exactly the number of missing equations in the case of three information sources (see last paragraph of previous section).

Now, combining Core Principles \ref{prc:wholes_are_sums_of_parts} and \ref{prc:parthood_criterion_red}, allows us the answer what the \textit{quantitative} relationship between redundant information and information atoms has to be: the redundant information of collections of sources $\mathbf{a}_1,\ldots,\mathbf{a}_m$ is the sum of all atoms that are part of the information provided by \textit{each} collection:
\begin{equation}\label{eq:quantitative_relation_red_atoms}
I_\cap(T:\mathbf{a}_1,\ldots,\mathbf{a}_m) = \sum\limits_{f(\mathbf{a}_i)  = 1 \forall i = 1,\ldots,m} \Pi(f)
\end{equation}
where again the notation means that we are summing over all f that satisfy the condition below the summation sign. This equation can be read in two ways: First, as placing a constraint on the redundant information $I_\cap$, namely that it has to be the sum of specific atoms. This means that if we already knew the sizes of the $\Pi$'s, we could compute $I_\cap$. However, the sizes of the $\Pi$'s are precisely what we are trying to work out. Now the crucial idea is that we can also read the equation the other way around: if we can come up with some reasonable measure of redundant information $I_\cap$ we may be able to \textit{invert} equation \ref{eq:quantitative_relation_red_atoms} in order to obtain the $\Pi$'s. So the final step will be to show that such an inversion is in fact possible and will lead to a unique solution for the atoms of information. 

Before proceeding to this step, it is important to briefly clarify the relationships between the three central concepts we have discussed so far: 

\begin{enumerate}
\item the mutual information (the quantity we want to decompose)
\item the information atoms (the quantities we are looking for)
\item redundant information (the quantity we are going to use to find the information atoms)
\end{enumerate}

These concept are easily confused with each other but should be clearly separated. The relationships between them are shown in Figure \ref{fig:relation_atoms_MI_Red}. First, based on what we have said so far, mutual information can be shown to be a special case of redundant information: the redundant information of a single collection $I_\cap(T:\mathbf{a}_1)$, i.e. "the information the collection shares \textit{with itself} about the target". The reason for this is that Core Principle \ref{prc:parthood_criterion_red} tells us that the redundant information of a single collection consists of all the atoms that are part of the mutual information carried by \textit{that} collection about the target. But this is simply the mutual information of that collection:
\begin{align}
I_\cap(T:\mathbf{a}_1) &\stackrel{\text{Eq. }\ref{eq:quantitative_relation_red_atoms}}{=} \sum\limits_{f(\mathbf{a}_i)  = 1 \forall i = 1,\ldots,m} \Pi(f) 
= \sum\limits_{f(\mathbf{a}_1) = 1}  \Pi(f) 
\stackrel{\text{Eq. }\ref{eq:quant_relation_atoms_MI}}{=} I(T:\mathbf{a}_1) 
\end{align}
Accordingly, mutual information has been called "self-redundancy" in the PID literature  (although not based on parthood arguments) \cite{williams2010nonnegative}. The relationship between redundant information and atoms is as follows: Only the "all-way" redundancy, i.e. the information shared by \textit{all} $n$ sources is itself an atom. Any other redundancy, such as the redundancy of only a subset of sources, is a composite quantity made up out of multiple atoms.

\begin{figure}[ht] 
\centering
\includegraphics[width=0.4\textwidth]{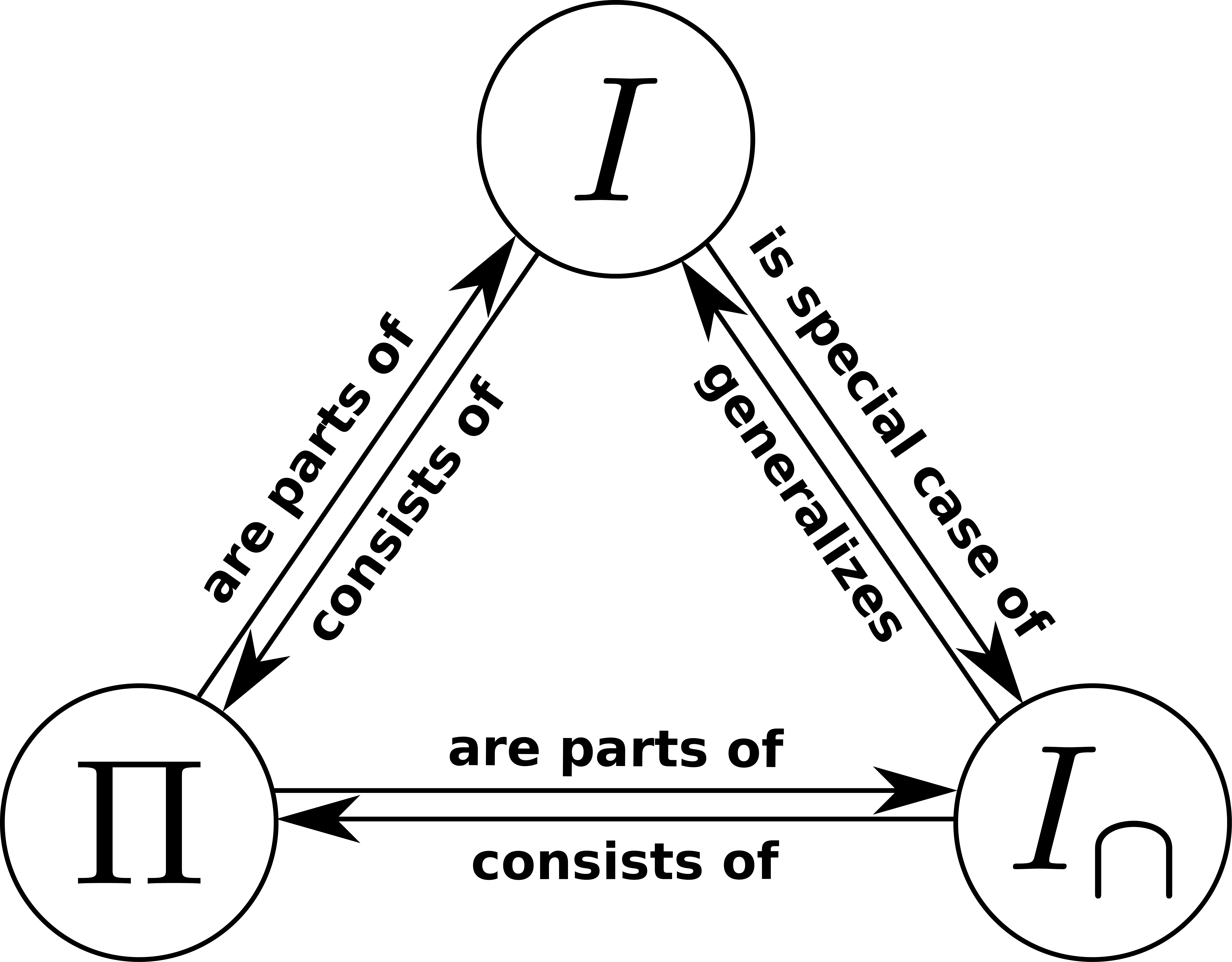}
\caption{Relationships between mutual information, redundant information, and information atoms.}
\label{fig:relation_atoms_MI_Red}
\end{figure}


\subsubsection{Show that a measure of redundant information leads to a unique solution for the information atoms} \label{sec:parthood_step_3}
There is a very useful fact about parthood distributions that will help us to obtain a unique solution for the atoms given an appropriate measure of redundant information: parthood distributions can be ordered in a very natural way into a lattice structure that is tightly linked to the idea of redundancy. The lattice for the case of three sources is shown in Figure \ref{fig:parthood_3_lattice}. The parthood distributions are ordered as follows: If there is a 1 in certain positions on a parthood distribution $f$, then all the parthood distributions g below it also have a 1 in the same positions, plus some additional ones. Or in terms of the atoms corresponding to these parthood distributions: If an atom $\Pi(f)$ is part of the information provided by some collections of sources, then all the atoms $\Pi(g)$ below it are also part of the information provided by these collections.  Formally, we will denote this ordering by $\sqsubseteq$ and it is defined as
\begin{equation}
f \sqsubseteq g \Leftrightarrow (f(\mathbf{a}) = 1 \rightarrow g(\mathbf{a}) = 1 \!\text{ for any }\! \mathbf{a} \subseteq \{1,\ldots,n\})
\end{equation}
For $n$ information sources we will denote the lattice of parthood distributions by $(\mathcal{B}_n, \sqsubseteq)$, where $\mathcal{B}_n$ is the set of all parthood distribution in the context of $n$ sources (for proof that this structure is in fact a lattice in the formal sense see Appendix \ref{app:proof_isomorph}. 

Note that the different "levels" of the lattice contain parthood distributions with the same number of ones and that higher level parthood distributions contain \textit{less} ones: At the very top in Figure \ref{fig:parthood_3_lattice}, there is the parthood distribution describing the atom that is \textit{only} part of the joint mutual information provided by all three sources combined, i.e. the synergy of the three sources. One level down, there are the three parthood distributions that assign the value 1 exactly two times. Yet another level down, we find the three possible parthood distributions that assign the value 1 exactly three times. And so on and so forth until we reach the bottom of the lattice which corresponds to the information shared by all three sources. Accordingly the corresponding parthood distribution assigns the value 1 to all collections (except of course the empty collection).

\begin{figure}[ht] 
	\centering
	\includegraphics[width=0.33\textwidth]{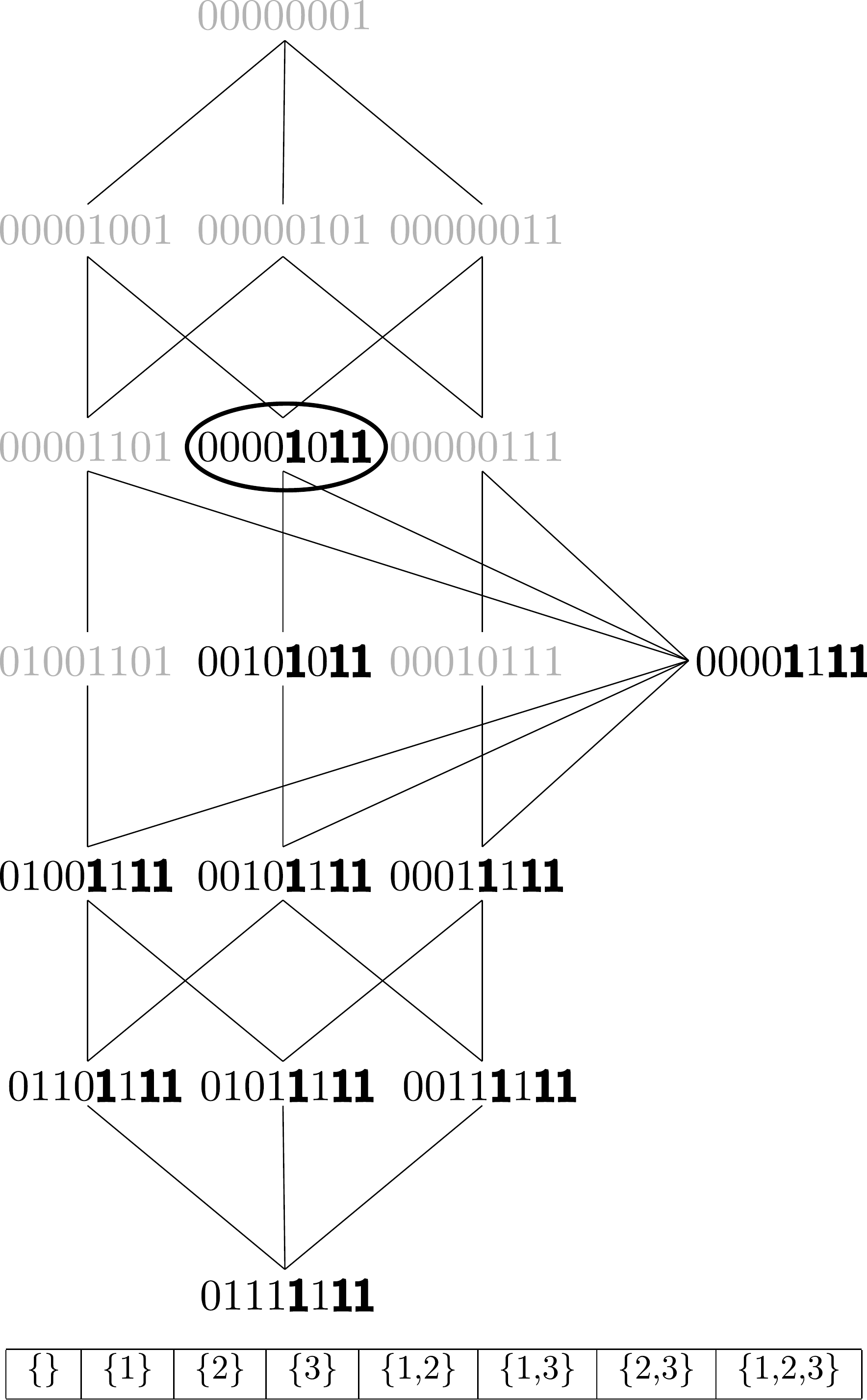}
	\caption{Lattice of parthood distributions for the case of three information sources. The parthood distributions are represented as bit-strings where the $i$-th bit is the value that the parthood distribution assigns to the $i$-th collection of sources. The order of these collections is shown below the lattice for reference. A distribution $f$ is below a distribution $g$ just in case $f$ has value 1 in the same positions as g and in some additional positions. This is illustrated for the parthood distribution highlighted by the black circle. The positions in which it assigns the value 1 are marked in bold face.}
	\label{fig:parthood_3_lattice}
\end{figure}

Ordering all the parthood distributions (and hence atoms) into such a lattice provides a good overview that tells us how many atoms exist for a given number of source variables and what their characteristic parthood relationships are. But the lattice plays a much more profound role because it is very closely connected to the concept of redundant information. The idea is to associate with each parthood distribution in the lattice a particular redundancy: the redundant information of all the collections that are assigned the value 1 by the distribution. In other words, for any parthood distribution $f$ we consider the redundancy
\begin{equation}\label{eq:I_cap_of_f}
I_\cap(T:f) := I_\cap\left(T:(\mathbf{a} \mid f(\mathbf{a})=1)\right)
\end{equation}
For example, in the case of three sources, the redundant information associated with the parthood distribution that assigns value 1 to collections $\{1,2\}$, $\{2,3\}$, and $\{1,2,3\}$, and value 0 to all other collections (the one emphasized in Figure \ref{fig:parthood_3_lattice}), is simply ${I_\cap(T:\{1,2\},\{2,3\},\{1,2,3\})}$. We saw in the previous section that any redundancy $I_\cap(T:\bfa_1,\ldots,\bfa_m)$ is the sum of all atoms that are part of the information provided by each of the $\bfa_i$. Now here is the connection between the lattice and redundant information: these atoms are the ones that have value 1 on each $\mathbf{a}_i$. But, by definition of the ordering, these are precisely the ones corresponding to parthood distributions \textit{below and including} the parthood distribution for which we are computing the associated redundancy. In other words, the redundant information associated with a parthood distribution $f$ can always be expressed as
\begin{equation}\label{eq:moebius_inversion}
I_\cap(T:f) = \sum\limits_{g \sqsubseteq f}\Pi(g)
\end{equation} 
In this way we obtain one equation per parthood distribution. And since there are as many information atoms as parthood distributions, we obtain as many equations as unknowns. This is already a good sign. But is a unique solution for the information atoms guaranteed? This question can be answered affirmatively by noting that the system of equations described by \eqref{eq:moebius_inversion} (one equation per $f$) is not just any linear system, but has a very special structure: one function $I_\cap(T:f)$ evaluated at a point $f$ on a lattice is the sum of another function $\Pi(f)$ over all points on the lattice below and including the point f. The process of solving such a system for the $\Pi(f)$'s once all the $I_\cap(T:f)$'s are given, or in other words \textit{inverting} equation \eqref{eq:moebius_inversion}, is called \textit{Moebius Inversion}. Crucially, a unique solution is guaranteed for any real or even complex valued function $I_\cap$ that we may put on the lattice \cite{tittmann2014einfuhrung}.

This means that we have now completely shifted the problem of determining the sizes of the information atoms to the problem of coming up with a reasonable definition of redundant information $I_\cap(T:f)$. Even though we have to define this quantity for \textit{each} parthood distribution $f$ this is still a much simpler task. The reason is that all the $I_\cap$'s represent exactly the same \textit{type} of information, namely redundant information. On the other hand, the information atoms $\Pi$ represent completely different types of information. Even in the simplest case of two sources we have to deal not only with redundant information, but also unique information and synergistic information.  And the story gets more and more complicated the more information sources are considered. 

Now, note that apparently we only need to define quite special redundant information terms, namely the redundancies associated with parthood distributions $I_\cap(T:f)$ (see definition \eqref{eq:I_cap_of_f}). However, we will now show that these are in fact \textit{all} possible redundancies, i.e. the redundancy of any tuple of collections of sources $\mathbf{a}_1,\ldots,\mathbf{a}_m$ is necessarily equal to a redundancy associated with a specific parthood distribution. The reason for this is that the quantitative relation between atoms and redundant information (equation \eqref{eq:quantitative_relation_red_atoms}) not only provides a way to solve for the information atoms once we know $I_\cap$, it also implies that $I_\cap$ has to satisfy the following invariance properties:
\begin{enumerate}
	\item $I_\cap(T:\mathbf{a}_1,\ldots,\mathbf{a}_m) = I_\cap(T:\mathbf{a}_{\sigma(1)},\ldots,\mathbf{a}_{\sigma(m)})$ for any permutation $\sigma$ (\textbf{symmetry})
	\item If $\mathbf{a}_i = \mathbf{a}_j$ for $i\neq j$, then ${I_\cap(T:\mathbf{a}_1,\ldots,\mathbf{a}_m)}$ $=$ ${I_\cap(T:\mathbf{a}_1,\ldots,\mathbf{a}_{i-1},\mathbf{a}_{i+1},\ldots, \mathbf{a}_m)}$  \\ \textbf{ (idempotency)}
	\item If $\mathbf{a}_i \supset \mathbf{a}_j$ for $i\neq j$, then ${I_\cap(T:\mathbf{a}_1,\ldots,\mathbf{a}_m)}$ $=$ ${I_\cap(T:\mathbf{a}_1,\ldots,\mathbf{a}_{i-1},\mathbf{a}_{i+1},\ldots, \mathbf{a}_m)}$ \textbf{ (invariance under superset removal / addition)}
	\item $I_\cap(T:\mathbf{a}) = I(T:\mathbf{a})$ (\textbf{self-redundancy})
\end{enumerate}

We can easily ascertain that any measure of redundant information $I_\cap$ has to have these properties by taking a closer look at the condition describing which atoms to sum over in order to obtain a particular redundant information term $I(T:\mathbf{a}_1,\ldots, \mathbf{a}_m)$: we have to sum over the atoms with parthood distribution satisfying $f(\mathbf{a}_i) = 1$ for all $i=1,\ldots,m$. Now whether or not this condition is true of a given parthood distribution $f$, first, does not depend on the \textit{order} in which the collections $\mathbf{a}_i$ are given (symmetry), secondly, it does not depend on whether the same collection $\mathbf{a}$ is repeated multiple times (idempotency), and thirdly, it does not matter whether we add or remove some collection $\mathbf{a}_i$ that is a proper superset of some other collection (superset removal/addition). This fact is due to the monotonicity constraint on parthood distributions. Finally, the "self-redundancy" property was already established in the previous section.

These invariance properties are referred in the literature as the Williams and Beer axioms for redundant information \cite{finn2018pointwise} (in addition there is a \textit{quantitative} monotonicity axiom that we reject. See \S\ref{sec:discussion_parthood_vs_quant}). However, in the parthood formalism described here they are not themselves axioms but are \textit{implied} by the core principles we have set out. The first two invariance properties imply that we may restrict ourselves to \textit{sets} instead of tuples of collections in defining $I_\cap$. The third constraint additionally tells us that we can restrict ourselves to those sets of collections $\{\mathbf{a}_1,\ldots,\mathbf{a}_m \}$ such that no collection $\mathbf{a}_i$ is a superset of another collection $\mathbf{a}_j$. Such sets of collections are called \textit{antichains}. Hence, the redundancy of \textit{any} tuple of collections of sources $\mathbf{a}_1,\ldots,\mathbf{a}_m$ is necessarily equal to the redundancy associated with a particular antichain. This antichain results from ignoring the order and repetitions of the $\mathbf{a}_i$, and removing any supersets. For instance, $I_\cap(T:\{1\},\{1\}, \{2\}, \{1,2\}) = I_\cap(T:\{1\},\{2\})$. 

We can now see that the redundancies $I_\cap(T:f)$ are in fact all possible redundancies by associating with any antichain $\alpha = \{\mathbf{a}_1,\ldots,\mathbf{a}_m\}$ a parthood distribution $f_\alpha$ that assigns the value 1 to all $\mathbf{a}_i$ \textit{and all supersets of these collections}, while it assigns the value 0 to all other collections. Now, due to the invariance of $I_\cap$ under removal of supersets, it immediately follows that $I_\cap(T:f_\alpha) = I_\cap(T:\alpha)$. So in conclusion, there is one redundancy for each antichain $\alpha$ and these redundancies are equal to the redundancies associated with the corresponding parthood distributions. Hence the redundancies $I_\cap(T:f)$ are in fact \textit{all} possible redundancies. 

Of course, there is also an inverse mapping associating with any parthood distribution f an antichain $\alpha_f$. In fact, the lattice of parthood distributions $(\mathcal{B}_n,\sqsubseteq)$ is \textit{isomorphic} to a lattice of antichains $(\mathcal{A}_n, \preceq)$ equipped with an ordering relationship that was originally introduced by Crampton and Loizou \cite{Crampton2000} and used by Williams and Beer in their original exposition of PID. The formal proof of this fact is postponed to section \S\ref{sec:logic} where a third perspective on PID, the logical perspective, is introduced.

In the next section, we will tackle the problem of defining a measure of redundant information for each parthood distibution / antichain by connecting PID theory to formal logic. The measure $I_\cap^\text{sx}$ derived in this way is identical to the one described in \cite{makkeh2020isx}. In showing how this measure can be inferred from logical- and parthood-principles we aim to 1) strengthen the argument for $I_\cap^\text{sx}$, and 2), open the gateway between PID-theory and formal logical. This latter point is elaborated in  \S\ref{sec:logic}.


\section{Using logic to derive a measure of redundant information}\label{sec:logic_isx}
We have now solved the PID problem up to specifying a reasonable measure of redundant information $I_\cap$ between collections that form an antichain. In this section, we will derive such a measure. In doing so we will first move from the level of random variables $T,S_1,\ldots,S_n$ to the level of particular realizations $t,s_1,\ldots,s_n$ of these variables. This level of description is generally called the \textit{pointwise} level and has been used as the basis of classical information theory by Fano \cite{fano1961transmission}. Pointwise approaches to PID have been put forth by \cite{finn2018pointwise} and \cite{makkeh2020isx}.

Note that moving to the level of realizations simplifies the problem considerably because realizations are much simpler objects than random variables. A realization is simply a specific
symbol or number whereas a random variables is an object that  may  take  on  various  different  values  and  can only  be  fully  described  by  an  entire  probability  distribution over these values.

\subsection{Going Pointwise}\label{sec:logic_pointwise}
The idea underlying the pointwise approach is to consider the information provided by a particular joint realization (observation) of the source random variables about a particular realization (observation) of the target random variable. So from now on we assume that these variables have taken on \textit{specific} values $s_1,\ldots,s_n,t$. It was shown by Fano \cite{fano1961transmission} that the whole of classical information theory can be derived from this pointwise level. By placing a certain number of reasonable constraints or axioms on pointwise information, it follows that this information must have a specific form. In particular, the pointwise mutual information $i(t:s)$ is given by
\begin{equation}
i(t:s) := \log\left(\frac{P(t|s)}{P(t)}\right)
\end{equation}
The mutual information $I(T:S)$ is then simply defined as the \textit{average} pointwise mutual information. Note that pointwise mututal information (in contrast to mutual information) can be both positive and negative. It essentially measures whether we are guided in the right or wrong direction with the respect to the actual target realization $t.$ If the conditional probability of $T=t$ given the observation of $S=s$ is larger than the unconditional (prior) probability of $T=t$, then we are guided in the right direction: The actual target realization is in fact $t$ and observing that $S=s$ makes us more likely to think so. Accordingly, in this case the pointwise mutual information is \textit{positive}. On the other hand, if the conditional probability of $T=t$ given the observation of $S=s$ is smaller than the unconditional (prior) probability of $T=t$, then we are guided in the wrong direction: Observing $S=s$ makes us less likely to guess the correct target value. In this case the pointwise mutual information is \textit{negative}. The joint pointwise mutual information of source realizations $s_1,\ldots,s_n$ about the target realization is defined in just the same way:
\begin{equation}
i(t:s_1,\ldots,s_n) := \log\left(\frac{P(t|s_1,\ldots,s_n)}{P(t)}\right)
\end{equation}

The idea is now to perform the entire partial information decomposition on the pointwise level, i.e. to decompose the pointwise joint mutual information $i(t:s_1,\ldots,s_n)$ that the source realizations provide about the target realization \cite{finn2018pointwise}. This leads to \textit{pointwise atoms} $\pi_{s_1,\ldots,s_n,t}$ (in the following we will generally drop the subscript). Crucially, we are only changing the quantity to be decomposed from $I(T:S_1,\ldots,S_n)$ to $i(t:s_1,\ldots,s_n)$. Otherwise, the idea is completely analogous to what we have discussed in  \S\ref{sec:parthood} (simply replace $I$ by $i$ and $\Pi$ by $\pi$): the goal is to decompose the pointwise mutual information into information atoms that are characterized by their parthood relations to the pointwise mutual information provided by the different possible collections of source realizations. These atoms have to stand in the appropriate relationship to \textit{pointwise redundancy}: the pointwise redundancy  $i_\cap(t:\mathbf{a}_1,\ldots,\mathbf{a}_m)$ is the sum of all pointwise atoms $\pi(f)$ that are part of the information provided by \textit{each} collection of source realizations $\mathbf{a}_i$. By exactly the same argument as described in \S\ref{sec:parthood}ciii, there is a unique solution for the pointwise atoms once a measure of pointwise redundancy $i(t:\alpha)$ is fixed for all antichains $\alpha = \{\mathbf{a}_1,\ldots,\mathbf{a}_m\}$. The \textit{variable-level} atoms $\Pi$ are then defined as the \textit{average} of the corresponding pointwise atoms:
\begin{equation}\label{eq:average_atoms}
\Pi(f) = \sum\limits_{s_1,\ldots,s_n,t} P(s_1,\ldots,s_n,t) \pi_{s_1,\ldots,s_n}(f)
\end{equation}
We are now left with defining the pointwise redundancy $i_\cap$ among collections of source realizations. As noted above this is a much easier problem than coming up with a measure of redundancy among collections of entire source variables. In the next section, we show how the pointwise redundancy of multiple collections of source realizations can be expressed as the information provided by a particular \textit{logical statement} about these realizations.

\subsection{Defining pointwise redundancy in terms of logical statements}
The language of formal logic allows us to form statements about the source realizations. In particular, we will consider statements made up out of the following ingredients:
\begin{enumerate}
    \item $n$ basic statements of the form $S_i = s_i$, i.e. ``Source $S_i$ has taken on value $s_i$''
    \item the logical connectives $\wedge$ (and),  $\vee$ (or), $\neg$ (not), $\rightarrow$ (if, then)
    \item brackets ),(
\end{enumerate}

In this way, we may form statements such as ${S_1 = s_1 \wedge S_2 = s_2}$ (``Source $S_1$ has taken on value $s_1$ and source $S_2$ has taken on value $s_2$'') or ${S_1 = s_1 \vee (S_2 = s_2 \wedge S_3 = s_3 )}$ (``Either source $S_1$ has taken on value $s_1$ or source $S_2$ has taken on value $s_2$ and source $S_3$ has taken on value $s_3$''). Now we may ask: What is the information provided by the truth of such statements about the target realization $t$? Classical information theory allows us to quantify this information as a pointwise mutual information: Let $A$ be any statement of the form just described, then the information  $i(t:A)$ provided by the truth of this statement is
\begin{equation}
i(t:A) := i(t:\mathbb{I}_{A}=1) = \log\left(\frac{P(t| \text{$A$ is true})}{P(t)}\right)
\end{equation}
where $\mathbb{I}_{A}$ is the \textit{indicator random variable} of the event that the statement $A$ is true, i.e. $\mathbb{I}_{A} = 1$ if the event occurred and $\mathbb{I}_{A} = 0$ if it did not. The interpretation of this information is that it measures whether and to what degree we are guided in the right or wrong direction with respect to the actual target value once we learn that statement $A$ is true.

Note that according to this definition the pointwise mutual information provided by a collection of source realizations $i(t:\mathbf{a})$ is the information provided by the truth of the \textit{conjunction} $\bigwedge_{i\in \mathbf{a}} S_i = s_i$:
\begin{equation}
i(t:\mathbf{a}) = i\left(t:\bigwedge_{i\in \mathbf{a}} S_i = s_i\right)
\end{equation}
Therefore, the information redundantly provided by collections of source realizations $\mathbf{a}_1,\ldots,\mathbf{a}_m$ is the information redundantly provided by the truth of the corresponding conjunctions. Now, what is this information? We propose that in general the following principle describes redundancy among statements:
\begin{principle}
The information redundantly provided by the truth of the statements $A_1,\ldots,A_m$ is the information provided by the truth of their \textit{disjunction} $A_1 \vee\ldots\vee A_m$.
\end{principle}
There are two motivations for this principle: First, the logical inferences to be drawn from the disjunction $A \vee B$ are precisely the inferences that can be drawn \textit{redundantly} from both $A$ and $B.$ If some conclusion $C$ logically follows from both $A$ and $B,$ then it also follows from $A\vee B$. Conversely, any conclusion $C$ that follows from the disjunction $A\vee B$ follows from both A and B. Formally,
\begin{align}
A\vee B \models C \Leftrightarrow A \models C \text{ and } B \models C
\end{align}
where $\models$ denotes logical implication. The second motivation again invokes the idea of parthood relationships: \textit{If some statement $C$ is logically weaker than a statement $A,$ then the information provided by $C$ should be part of the information provided by $A$}. For instance, the information provided by the statement ${S_1 = s_1}$ has to be part of the information provided by the statement ${S_1 = s_1 \wedge S_2 = s_2}$. This idea is illustrated in the information diagram on the left side in Figure \ref{fig:log_mon_and_red_arrow}.

\begin{figure}
    \centering
	\includegraphics[width=0.5\textwidth]{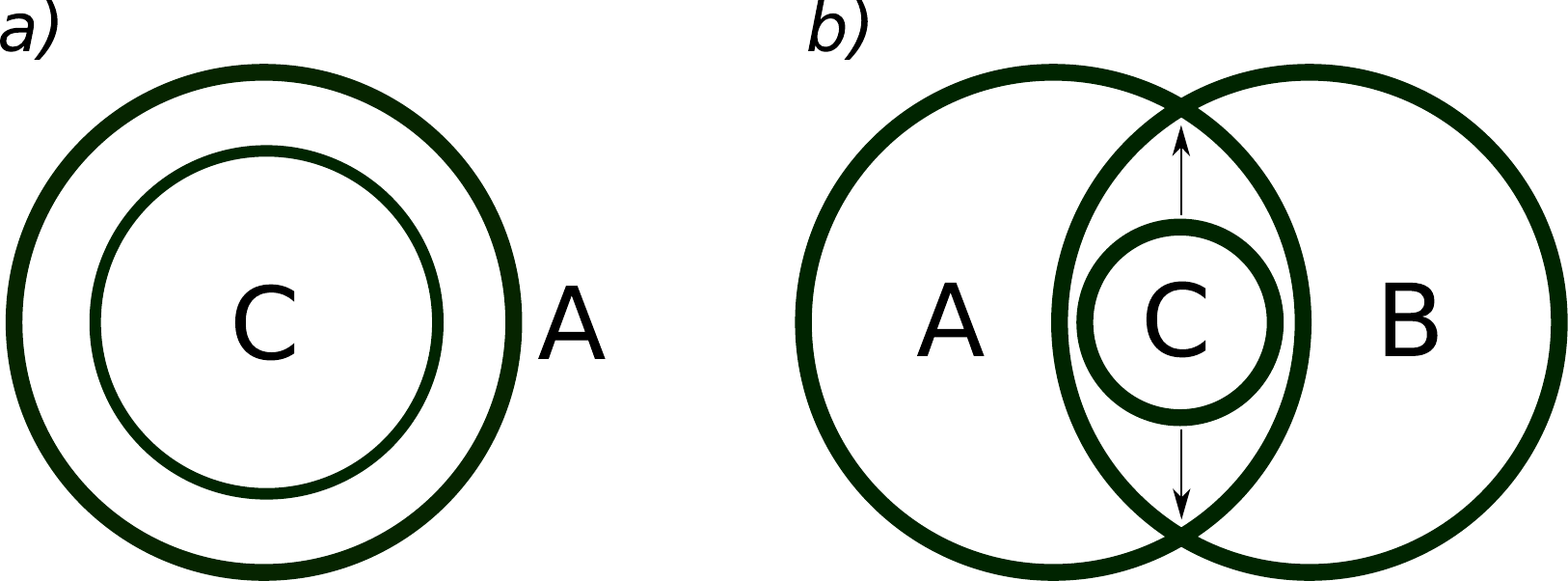}
	\caption{(a) Information diagram depicting the information provided by statements $A$  and $C$. If statement C is logically weaker than statement $A,$ i.e. if $C$ is implied by $A,$ then the information provided by $C$ has to be part of the information provided by $A.$ (b) Information diagram depicting the information provided by statements $A$, $B$, and $C$. $C$ is assumed to be logically weaker than both $A$ and $B.$ Thus it has to be part of the information provided by $A$ and also part of the information provided by $B.$ Accordingly, it is contained in the ``overlap'', i.e.~the redundant information of $A$ and $B.$ In order to obtain the entire redundant information statement $C$ has to be ``maximized'', i.e.~it has to be chosen as the strongest statement weaker than both $A$ and $B$ (this is indicated by the arrows).}
	\label{fig:log_mon_and_red_arrow}
\end{figure}

Now, this idea implies that if a statement $C$ is weaker than \textit{both} $A$ and $B,$ then the information provided by $C$ is part of the information carried by $A$ and also part of the information carried by $B.$ But this means that the information provided by $C$ is part of the \textit{redundant information } of $A$ and $B.$ In order to obtain the \textit{entire} redundant information, the statement $C$ should therefore be chosen as the \textit{strongest} statement logically weaker than both $A$ and $B$ (see right side of Figure \ref{fig:log_mon_and_red_arrow}). But this statement is the disjunction $A \vee B$ (or any equivalent statement).

Based on these ideas we can now finally formulate our proposal for a measure of pointwise redundancy $i_\cap(t:\mathbf{a}_1,\ldots,\mathbf{a}_m)$. We noted above that the information redundantly provided by collections of realizations $\mathbf{a}_1,\ldots,\mathbf{a}_m$ is the information redundantly provided by the conjunctions $\bigwedge_{i\in \mathbf{a}_j} S_i = s_i$. And by the arguments just presented this is the information provided by the \textit{disjunction of these conjunctions}. We denote the function that measures pointwise redundant information in this way by $i_\cap^{\text{sx}}$ (for reasons that will be explained shortly). It is formally defined as:

\begin{equation}
i_\cap^{\text{sx}}(t:\mathbf{a}_1,\ldots,\mathbf{a}_m) := i\left(t:\bigvee_{j=1}^m\bigwedge_{i\in \mathbf{a}_j} S_i = s_i\right)
\end{equation}

Recall that by definition this is the pointwise mutual information provided by the truth of the statement in question. Hence, it measures whether and to what degree we are guided in the right or wrong direction with respect to the actual target value $t$ once we learn that the statement is true. 

We have now arrived at a \textit{complete} solution to the partial information decomposition problem: Given the measure $i_\cap^{\text{sx}}$ we may carry out the Moebius-Inversion
\begin{equation}\label{eq:moebius_inversion_sx_pw}
i_\cap^{\text{sx}}(t:f) = \sum\limits_{g \sqsubseteq f} \pi^{\text{sx}}(f)
\end{equation} 
in order to obtain the pointwise atoms $\pi^{\text{sx}}$. This has to be done for \textit{each} realization $s_1,\ldots,s_n,t$. The obtained values can then be averaged as per Equation \eqref{eq:average_atoms} to obtain the variable-level atoms $\Pi^{\text{sx}}$.

As shown in \cite{makkeh2020isx}, the measure $i_\cap^{\text{sx}}$ can also be motivated in terms of the notion of  \textit{shared exclusions} (hence the superscript ``sx''). The underlying idea is that redundant information is linked to possibilities (i.e. points in sample space) that are redundantly excluded by multiple source realizations. We argue that the fact that the measure $i_\cap^{\text{sx}}$ can be derived in these two independent ways provides further support for its validity. We offer a freely accessible implementation of the $i_\cap^{sx}$ PID as part of the IDTxl toolbox \cite{wollstadt2018idtxl}. Worked examples of its computation and details on the computational complexity can be found in \cite{makkeh2020isx}.

In the following section, we show that the value of formal logic within the theory of partial information decomposition goes far beyond helping us to define a measure of pointwise redundant information. In fact, similar to the lattices of parthood distributions and antichains, there is a lattice of logical statements that can equally be used as the basic mathematical structure of partial information decomposition. This lattice is particularly useful because the ordering relationship turns out to be very simple and well-understood: the relation of logical implication. We will show that this perspective also offers an independent starting point for the development of PID theory.


\section{The logical perspective}\label{sec:logic}

\subsection{Logic Lattices}
The considerations of the previous section identified the information redundantly provided by collections $\mathbf{a}_1,\ldots,\mathbf{a}_m$ with the information provided by a particular logical statement: a disjunction of conjunctions of basic statements of the form $S_i = s_i$. This has an interesting implication: there is a one-to-one mapping between antichains $\alpha$ and logical statements. Let us now look at this situation a bit more abstractly by replacing the concrete statements $S_i = s_i$ with \textit{propositional variables} $\phi_1,\ldots,\phi_n$. Together with the logical connectives $\neg, \vee, \wedge, \rightarrow$ (plus brackets) these form a language of propositional logic \cite{smullyan1995first}. We will denote this language by $\mathbb{L}$. We may now formally introduce a mapping $\Psi$ from the set of antichains $\mathcal{A}$ into $\mathbb{L}$ via
\begin{equation}
\Psi:\mathcal{A} \rightarrow \mathbb{L}, \hspace{0.1cm} \text{ where } \alpha \rightarrowtail \tilde{\alpha} := \bigvee_{a \in \alpha} \bigwedge_{i\in a} \phi_i
\end{equation}
In other words, $\alpha$ is mapped to a statement by first conjoining the $\phi_i$ corresponding to indices \textit{within} each $\mathbf{a}_i$ and then disjoining these conjunctions. For instance, the antichain $\{\{1,2\},\{2,3\}\}$ will be associated with the statement $(\phi_1 \wedge \phi_2) \vee (\phi_2 \wedge \phi_3)$. Note that if we interpret the propositional variables $\phi_i$ as ``source $S_i$ has taken on value $s_i$'', then this is of course precisely the mapping of an antichain to the statement providing the redundant information (in the sense of $i_\cap^{sx}$) associated with that antichain. \footnote{There is a slight ambiguity in the definition of $\Psi$ since the \textit{order} of the conjunctions $\bigwedge_{i\in a} \phi_i$ and statements $\phi_i$ is not specified. This problem can be solved, however, by choosing any enumeration of the elements $\mathbf{a}$ of the powerset of $\{1,\ldots,n\}$ and ordering the conjunctions $\bigwedge_{i \in \mathbf{a}} \phi_i$ accordingly. The propositional variables $\phi_i$ within the conjunctions may simply be ordered by ascending order of their indices.}

The range $\mathcal{L} \subseteq \mathbb{L}$ of $\Psi$ is \textit{set of all disjunctions of logically independent conjunctions of pairwise distinct propositional variables}. The logical independence of the conjunctions is the logical counterpart of the antichain property. The ``pairwise distinct'' condition ensures that the same atomic statement does not occur multiple times in any conjunction. The set $\mathcal{L}$ can now be equipped with the relationship of logical implication $\models$ in order to obtain a new structure $(\mathcal{L},\leftmodels)$ which we will show to be isomorphic to the lattices of antichains and parthood distributions. Here $\models$ means ``implies'' and $\leftmodels$ means ``is implied by''.

Based on these concepts, the following theorem expresses the isomorphism of $(\mathcal{L},\leftmodels)$ to the lattices of antichains and parthood distributions:
\begin{thm}
	For all $n\in \mathbb{N}$: $(\mathcal{L}_n,\leftmodels)$ is isomorphic to $(\mathcal{A}_n, \preceq)$ and $(\mathcal{B}_n, \sqsubseteq)$
\end{thm}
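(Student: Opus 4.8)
The plan is to reduce the three-way isomorphism to two encoding bijections out of the antichain lattice and then to invoke transitivity of isomorphism. Throughout I regard $\mathcal{L}_n$ as the set of formulas $\ta$ taken \emph{modulo logical equivalence}, so that $\leftmodels$ is genuinely antisymmetric and $(\mathcal{L}_n,\leftmodels)$ is an honest partial order. Two maps do the work: the map $\Psi$ already introduced, $\alpha\mapsto\ta=\bigvee_{a\in\alpha}\bigwedge_{i\in a}\phi_i$, and the map $\Phi:\mathcal{A}_n\to\mathcal{B}_n$ sending an antichain $\alpha$ to the parthood distribution $f_\alpha$ defined by $f_\alpha(\mathbf{b})=1$ iff $a\subseteq\mathbf{b}$ for some $a\in\alpha$. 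It suffices to show that each of $\Phi$ and $\Psi$ is an order isomorphism; the isomorphism $(\mathcal{B}_n,\sqsubseteq)\cong(\mathcal{L}_n,\leftmodels)$ then follows by composing $\Phi^{-1}$ with $\Psi$.

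First I would treat $\Phi$. That it is a bijection is the standard correspondence between antichains of a finite poset and its order-filters (up-sets): the support $\{\mathbf{b}:f(\mathbf{b})=1\}$ of a parthood distribution is, by the monotonicity constraint, an up-set in $(\mathcal{P}([n]),\subseteq)$; its minimal elements form an antichain $\alpha_f$; and $\alpha\mapsto{\uparrow\alpha}$ (the up-set generated by $\alpha$) is inverse to $f\mapsto\alpha_f$. The two boundary constraints $f(\{\})=0$ and $f([n])=1$ exactly exclude the empty antichain and the antichain $\{\{\}\}$, matching the intended domain $\mathcal{A}_n$. For order preservation one computes $\operatorname{supp}(f_\alpha)={\uparrow\alpha}$ and checks the purely combinatorial equivalence
\[
{\uparrow\beta}\subseteq{\uparrow\alpha}\iff \forall\, b\in\beta\ \exists\, a\in\alpha:\ a\subseteq b,
\]
whose right-hand side is exactly the Crampton--Loizou relation $\alpha\preceq\beta$. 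Written in terms of the supports of $f_\alpha$ and $f_\beta$ this is precisely $f_\alpha\sqsubseteq f_\beta$, so $\alpha\preceq\beta\iff\Phi(\alpha)\sqsubseteq\Phi(\beta)$ and $(\mathcal{A}_n,\preceq)\cong(\mathcal{B}_n,\sqsubseteq)$.

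Next I would treat $\Psi$, which carries the substance of the theorem. Surjectivity onto $\mathcal{L}_n$ holds by definition, since $\mathcal{L}_n$ is the range of $\Psi$. For injectivity, note that $\ta$ is a \emph{positive} (negation-free) formula and that a truth assignment, identified with the set $X\subseteq[n]$ of variables it sets true, satisfies $\ta$ iff $a\subseteq X$ for some $a\in\alpha$; hence the minimal satisfying sets of $\ta$ are exactly the members of $\alpha$ (the antichain condition guarantees none subsumes another, so each is a prime implicant). Since a monotone Boolean function is determined by its minimal true points, $\ta\equiv\tb$ forces $\alpha=\beta$. For order preservation the key lemma is
\[
\tb\models\ta\iff \forall\, b\in\beta\ \exists\, a\in\alpha:\ a\subseteq b .
\]
The forward direction is obtained by evaluating $\ta$ at the minimal satisfying assignment $X=b$ of $\tb$; the converse uses positivity, since if some $a\subseteq b$ then any $X$ satisfying $\bigwedge_{i\in b}\phi_i$ also satisfies $\bigwedge_{i\in a}\phi_i$ and hence $\ta$. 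The right-hand side is again $\alpha\preceq\beta$, so $\alpha\preceq\beta\iff\Psi(\alpha)\leftmodels\Psi(\beta)$ and $\Psi$ is an order isomorphism.

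Composing, $(\mathcal{L}_n,\leftmodels)\cong(\mathcal{A}_n,\preceq)\cong(\mathcal{B}_n,\sqsubseteq)$, which is the claim. I expect the only real obstacle to be the order-preservation lemma for $\Psi$: it is where semantic logical implication between formulas must be converted into the finite combinatorial containment condition defining $\preceq$, and the argument genuinely relies on the formulas being positive and in prime-implicant (irredundant) form --- precisely the two features that the defining conditions on $\mathcal{L}_n$ (logical independence of the conjuncts, pairwise distinct variables) were set up to guarantee. Everything else --- the antichain/up-set bijection underlying $\Phi$ and the two support computations --- is routine once these structural facts are in place.
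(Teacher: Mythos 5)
Your proposal is correct and takes essentially the same route as the paper: both factor the three-way isomorphism through the antichain lattice using the same two maps (your $\Phi$ is the paper's $\varphi$, and $\Psi$ is identical), and both prove the order-preservation lemma for $\Psi$ by the same valuation argument (evaluating at the characteristic assignment of a set in the antichain for one direction, and using positivity/monotonicity of the conjunctions for the other). The only differences are bookkeeping: you quotient $\mathcal{L}_n$ by logical equivalence upfront and prove injectivity of $\Psi$ semantically via minimal true points (prime implicants), whereas the paper keeps syntactic formulas, proves injectivity syntactically, and obtains antisymmetry of $\leftmodels$ on $\mathcal{L}_n$ as a corollary of the isomorphism; likewise you cite the standard antichain/up-set correspondence for $\Phi$ where the paper verifies bijectivity and structure preservation by hand.
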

\begin{proof}
See Electronic Appendix \ref{app:proof_isomorph}.
\end{proof}
\begin{cor}\label{cor:pos_lat}
		For all $n\in \mathbb{N}$: $(\mathcal{L}_n,\leftmodels)$ is a poset and specifically a lattice.
\end{cor}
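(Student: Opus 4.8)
The plan is to obtain the corollary as a direct transfer of structure along the isomorphism established in Theorem 1, exploiting the fact that ``being a poset'' and ``being a lattice'' are purely order-theoretic properties and hence invariant under order isomorphism. Thus, once one of the structures $(\mathcal{A}_n, \preceq)$ or $(\mathcal{B}_n, \sqsubseteq)$ is known to be a lattice, the same follows automatically for $(\mathcal{L}_n, \leftmodels)$. I would anchor the argument on $(\mathcal{B}_n, \sqsubseteq)$, since the excerpt has already indicated that this structure is shown to be a lattice in Appendix \ref{app:proof_isomorph}.

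Concretely, first I would recall that Theorem 1 furnishes a bijection $\Phi : \mathcal{B}_n \to \mathcal{L}_n$ (obtained by composing the antichain correspondence with $\Psi$) satisfying $f \sqsubseteq g \Leftrightarrow \Phi(f) \leftmodels \Phi(g)$ for all $f, g \in \mathcal{B}_n$; that is, $\Phi$ both preserves and reflects the order. Since $\sqsubseteq$ is reflexive, antisymmetric, and transitive, each of these properties pulls back through the bijection to $\leftmodels$. For instance, antisymmetry follows because $A \leftmodels B$ and $B \leftmodels A$ force $\Phi^{-1}(A) \sqsubseteq \Phi^{-1}(B)$ and $\Phi^{-1}(B) \sqsubseteq \Phi^{-1}(A)$, whence $\Phi^{-1}(A) = \Phi^{-1}(B)$ and so $A = B$; reflexivity and transitivity transfer by the same token. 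This establishes that $(\mathcal{L}_n, \leftmodels)$ is a poset. For the lattice claim I would then argue that an order isomorphism sends meets to meets and joins to joins, because the meet $f \sqcap g$ (the greatest lower bound) is characterized entirely in terms of the order relation, which $\Phi$ preserves and reflects. Hence, for any $A, B \in \mathcal{L}_n$, the element $\Phi\!\left(\Phi^{-1}(A) \sqcap \Phi^{-1}(B)\right)$ is the meet of $A$ and $B$ in $(\mathcal{L}_n, \leftmodels)$, and symmetrically for joins; the existence of all binary meets and joins in $(\mathcal{B}_n, \sqsubseteq)$ therefore guarantees their existence in $(\mathcal{L}_n, \leftmodels)$.

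I expect no genuine obstacle, since the statement is a formal consequence of the isomorphism already proven. The one point that deserves care is the antisymmetry of $\leftmodels$: logical implication is \emph{a priori} only a preorder, as two distinct but logically equivalent statements would be mutually implied, so one cannot simply invoke properties of $\models$ on all of $\mathbb{L}$. The argument must genuinely use the bijectivity of $\Phi$ — equivalently, the fact that $\Psi$ maps distinct antichains to logically \emph{inequivalent} statements, which is exactly the content secured by Theorem 1. This is what rules out the preorder pathology and makes the transfer of the lattice structure legitimate.
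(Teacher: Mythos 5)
Your proposal is correct and takes essentially the same approach as the paper: the paper's proof is a one-line transfer of lattice structure along the isomorphism of Theorem 1, the only difference being that it anchors on $(\mathcal{A}_n,\preceq)$ being a lattice (citing Crampton and Loizou) rather than on $(\mathcal{B}_n,\sqsubseteq)$, whose lattice property the appendix itself obtains from that same isomorphism rather than proving directly. Your remark on antisymmetry --- that the bijectivity of the correspondence is exactly what upgrades $\leftmodels$ from a preorder to a partial order on $\mathcal{L}_n$, since distinct elements of $\mathcal{L}_n$ cannot be logically equivalent --- is left implicit in the paper's proof but is indeed used later (the paper invokes antisymmetry of $\models$ on $\mathcal{L}$ in its proof of Proposition 1), so making it explicit is a genuine, if small, improvement in rigor.
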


\begin{figure}
    \centering
	\includegraphics[width=0.9\textwidth]{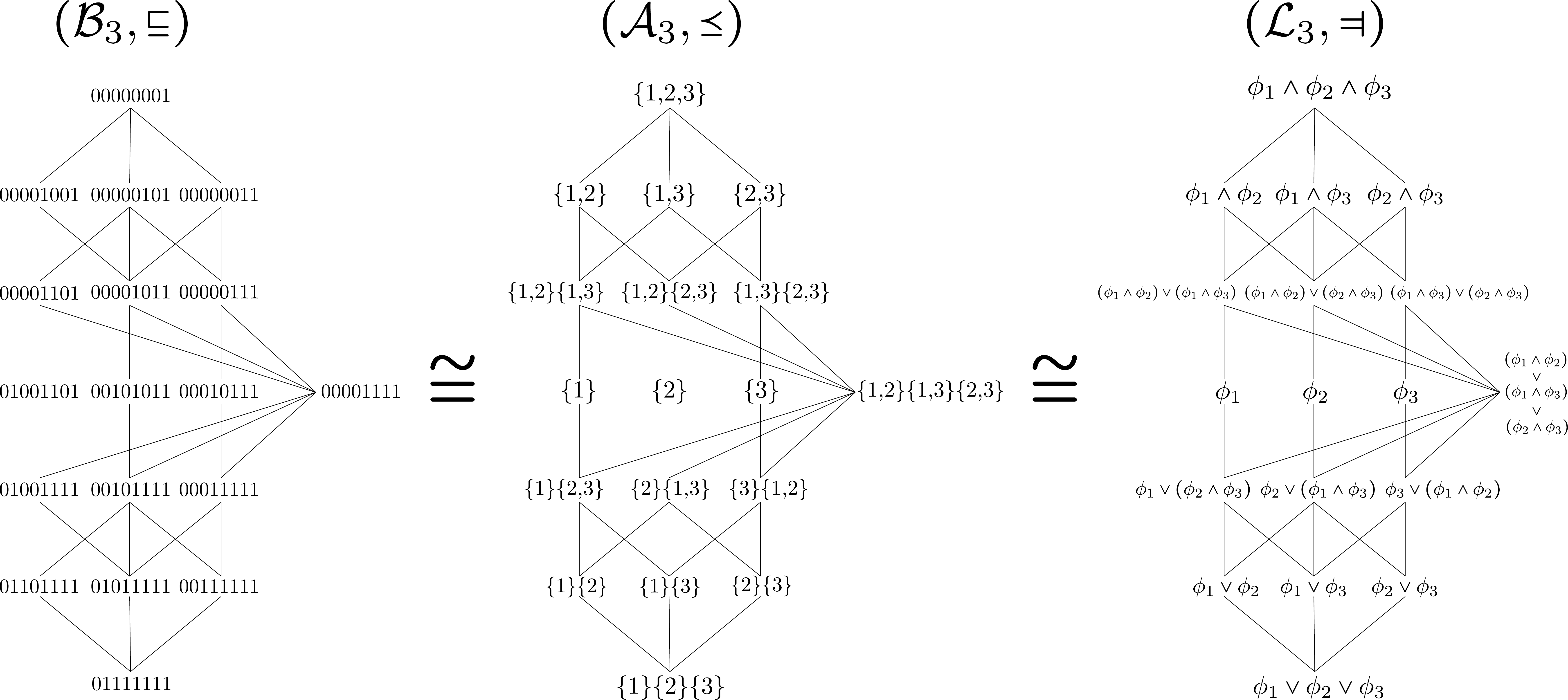}
	\caption{The three isomorphic worlds of partial information decomposition: parthood distributions, antichains, and logical statements.}
	\label{fig:3_lattice_mon_bool_antichain_statement}
\end{figure}
In this way the logical perspective is put on equal footing with the parthood perspective and "antichain" perspective described by Williams and Beer \cite{williams2010nonnegative}. They are in fact three equivalent ways to describe the mathematical structure underlying partial information decomposition. These three ``worlds'' of PID are illustrated in Figure \ref{fig:3_lattice_mon_bool_antichain_statement} for the case of three information sources. 

Intuitively, the logic lattice can be understood as a hierarchy of logical constraints describing how (i.e. via which collections of sources) information about the target may be accessed. The information atom associated with a node $\tilde{\alpha}$ in the logic lattice is \textit{exactly} the information about the target that can be accessed in the way described by the constraint $\tilde{\alpha}$. For example, the information shared by all sources $\Pi(\{1\},\{2\},\{3\})$ is to be found at the very bottom of the logic lattice because access to this information is constrained in the least possible way: the shared information can be accessed via \textit{any} source (i.e. via source 1 \textit{or} source 2 \textit{or} source 3). By monotonicity, the shared information is of course also accessible via any \textit{collection} of sources so that in total there are seven ways to access it (one per collection). By contrast, the all-way synergy $\Pi(\{1,2,3\})$ is located at the very top of the logic lattice because access to it is most heavily constrained: the synergy can only be accessed if all sources are known at the same time. Hence, there is only a single way (collection) to access it. All other atoms are to be found in between these two extremes. For instance, the atom corresponding to the constraint $\phi_1 \vee (\phi_2 \wedge \phi_3)$ is exactly the information that can be accessed either via source 1 or via sources 2 and 3 jointly (and of course via any superset of these collections by monotonicity) \textit{but not in any other way} (i.e. not via sources 2 or 3 individually). So in total there are five ways to access it corresponding to the collections $\{1\}, \{1,2\}, \{1,3\}, \{2,3\},\{1,2,3\}$. In general, the atoms on the k-th level of the logic lattice (starting to count at the top) are precisely the atoms that can be accessed via k collections of sources (compare this to the very similar insight in \S\ref{sec:parthood}ciii).

Finally, one may also associate a redundant information term with each node in the logic lattice by interpreting the statements as \textit{sufficient conditions} for access instead of \textit{constraints}, i.e. sufficient and necessary conditions, on access.  For instance, the redundancy associated with the statement  $\phi_1 \wedge \phi_2 \wedge \phi_3$ would be all information for which joint knowledge of all three sources is sufficient. But this is of course \textit{all} information contained in the sources, i.e. the entire joint mutual information. By contrast, the information atom associated with the same statement is the information for which joint knowledge of all three sources is not only sufficient but also necessary, i.e. it cannot be obtained via any other collection of sources. Or put generally: while the redundancy is the information we obtain \textit{if} we have knowledge of certain collections of sources, the information atom is the information we obtain \textit{if and only if } we have such knowledge. Defined in this way the redundant information of a lattice node is again the sum of atoms associated with nodes below and including it.

In this way the logical perspective can be used as an independent starting point to develop PID theory. Instead of characterizing atoms by their defining parthood relations one might equally characterize them by their defining access constraints and relate them to the notion of redundant information in the way just described. This is summarized in the following Core Principle:
\begin{principle}\label{prc:core_principle_logic}
Each atom of information is characterized by a logical constraint describing via which collections of sources it can be accessed. The atom $\Pi(\ta)$ associated with constraint $\ta = \bigvee_{\bfa \in \alpha} \bigwedge_{i\in \bfa} \phi_i$ is exactly that part of the joint mutual information about the target that can be accessed if and only if we have knowledge of any one of the collections of sources $\bfa$.
\end{principle}

Now that we have fully introduced both the parthood and logical approaches to PID it is worth noting their key difference to the original "antichain" approach by Williams and Beer: whereas the parthood and logic approaches are looking at the problem from the perspective of the atoms and seek to describe their defining parthood relations / access constraints, the antichain based approach starts off by placing certain axioms on measures of redundant information leading to the insight that the definition of redundancy may be restricted to antichains. The atoms are then \textit{indirectly} introduced in terms of a Moebius-Inversion over the lattice of antichains.

The next section highlights an additional use of logic lattices, namely as a mathematical tool to analyse the structure of PID lattices.


\subsection{Using logic lattices as a mathematical tool to analyse the structure of PID lattices}
One advantage that logic lattices have over the lattices of antichains and parthood distributions is that their ordering relationship is particularly natural and well-understood: logical implication between statements. By contrast, the ordering relation $\preceq$ on the lattice of antichains only seems to have been studied in quite restricted order theoretic contexts so far. Furthermore, it is a purely technical concept that does not have a clear-cut counterpart in ordinary language. Because of the simplicity of its ordering relation, many important order theoretic concepts have a simple interpretation within the logic lattice. This makes it a useful tool to understand the structure of the lattice itself which in turn is relevant to the computation of information atoms. 

There is an interesting fact about the statements in $\mathcal{L}$ that will be useful in the following investigations:  they correspond to statements with monotonic truth-tables. The truth-table $T_{\tilde{\alpha}}: \mathcal{V} \rightarrow \{0,1\}$ of a statement $\tilde{\alpha}$ describes which models $V \in \mathcal{V}$ satisfy $\tilde{\alpha}$ (``make $\tilde{\alpha}$ true''), i.e.
\begin{equation}
T_{\tilde{\alpha}}(V) =
\begin{cases}
1 & \text{if } \models_V \tilde{\alpha}\\
0 & \text{otherwise }\\
\end{cases}
\end{equation} 
A truth-table $T$ shall be called \textit{monotonic} just in case $\forall i \in \{1,\ldots,n\}$
\begin{equation}
\small{(V(\phi_i) = 1 \rightarrow V^\prime(\phi_i) = 1) \Rightarrow \left(T(V) = 1 \rightarrow T(V^\prime) = 1\right)}
\end{equation}
In other words, suppose a statement $\tilde{\alpha}$ is satisfied by a valuation $V$. Now suppose further that a new valuation $V^\prime$ is constructed by flipping one or more zeros to one in $V$. Then $\tilde{\alpha}$ has to be satisfied by $V^\prime$ as well. Making some $\phi_i$ true that were previously false cannot make $\tilde{\alpha}$ false if it was previously true. With this terminology at hand the following proposition can be formulated:
\begin{prop} \label{prop:mon_tt}
	All $\tilde{\alpha} \in \mathcal{L}$ have monotonic truth-tables. Conversely, for all monotonic truth-tables T, there is exactly one $\tilde{\alpha} \in \mathcal{L}$ such that $T_{\tilde{\alpha}} = T$. In other words, the statements in $\mathcal{L}$ are, up to logical equivalence, exactly the statements of propositional logic with monotonic truth-tables.
\end{prop}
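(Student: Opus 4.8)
The statement splits into a soundness claim (every $\ta \in \mathcal{L}$ has a monotonic truth-table) and a completeness-with-uniqueness claim (every monotonic truth-table is realised by a unique $\ta \in \mathcal{L}$). The plan is to dispatch the first by a one-line syntactic observation and to settle the second through the classical correspondence between monotone Boolean functions and their minimal true points. For the forward direction I would argue directly from the shape of the formulas in $\mathcal{L}$: each $\ta = \bigvee_{\mathbf{a} \in \alpha} \bigwedge_{i \in \mathbf{a}} \phi_i$ is built from the propositional variables using only $\wedge$ and $\vee$, with no occurrence of $\neg$. Suppose $V$ satisfies $\ta$; then some disjunct $\bigwedge_{i \in \mathbf{a}} \phi_i$ is true under $V$, i.e.\ $V(\phi_i) = 1$ for every $i \in \mathbf{a}$. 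If $V'$ is obtained from $V$ by flipping zeros to ones, so that $V(\phi_i) = 1 \Rightarrow V'(\phi_i) = 1$ for all $i$, then $V'(\phi_i) = 1$ for every $i \in \mathbf{a}$ as well, so the same disjunct, and hence $\ta$, is true under $V'$. This is precisely monotonicity of $T_{\ta}$.

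For the converse I would first set up the dictionary identifying each valuation $V$ with its support $\mathrm{supp}(V) = \{ i : V(\phi_i) = 1\} \subseteq \{1,\ldots,n\}$; under this identification the pointwise order on valuations becomes set inclusion, so a monotonic truth-table $T$ corresponds to an \emph{up-set} $\mathcal{U}_T = \{ \mathrm{supp}(V) : T(V) = 1\}$ in $(\mathcal{P}(\{1,\ldots,n\}), \subseteq)$. Taking $\alpha$ to be the set of $\subseteq$-minimal elements of $\mathcal{U}_T$ yields an antichain, and I would set $\ta := \Psi(\alpha)$. To verify $T_{\ta} = T$, I observe that $V \models \ta$ iff some $\mathbf{a} \in \alpha$ satisfies $\mathbf{a} \subseteq \mathrm{supp}(V)$, which holds iff $\mathrm{supp}(V) \in \mathcal{U}_T$ (in a finite poset every element of an up-set dominates one of its minimal elements), i.e.\ iff $T(V) = 1$. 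Since $\alpha$ is an antichain and each conjunction ranges over distinct indices, $\ta$ genuinely lies in $\mathcal{L}$.

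Uniqueness would then follow by running the correspondence backwards: any $\tb \in \mathcal{L}$ equals $\Psi(\beta)$ for some antichain $\beta$, and its satisfying supports form exactly the up-closure of $\beta$, whose minimal elements recover $\beta$. Hence $T_{\tb} = T$ forces the up-closure of $\beta$ to equal $\mathcal{U}_T$, and since an up-set determines its minimal antichain uniquely, $\beta = \alpha$ and therefore $\tb = \ta$, with $\Psi$ made single-valued by the ordering convention of its defining footnote. Equivalently, distinct antichains produce distinct truth-tables, so passing from a formula of $\mathcal{L}$ to its truth-table is injective.

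The hard part is the existence half of the converse, and it is essentially bookkeeping of two kinds. First, I must check that the translation ``valuations $=$ subsets'' genuinely turns monotonic truth-tables into up-sets and the minimal-true-point construction into an antichain whose positive disjunctive form reconstitutes $T$ — this is where the real content sits. Second, the two constant truth-tables need explicit treatment: the always-false table corresponds to the empty antichain (an empty disjunction, $\bot$) and the always-true table to the antichain $\{\emptyset\}$ (an empty conjunction, $\top$), so the claim holds for them only once $\Psi$ is understood to be defined on these degenerate antichains as well; I would state that convention at the outset. (One could instead invoke the isomorphism $(\mathcal{B}_n,\sqsubseteq)\cong(\mathcal{L}_n,\leftmodels)$ of Theorem~1 together with the identification of parthood distributions with monotone Boolean functions on supports, but that route proves only the restricted statement, since $\mathcal{B}_n$ and $\mathcal{L}_n$ exclude the two constants, and it still rests on the same support dictionary, so I prefer the direct argument.)
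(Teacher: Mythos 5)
Your proof is correct, and it arrives at the same canonical object as the paper's proof --- the positive disjunctive normal form determined by the minimal true points of $T$ --- but by a different mechanical route, so a short comparison is warranted. For existence, the paper first writes down the full positive DNF $\tilde{\alpha}^* = \bigvee_{T(V)=1}\bigwedge_{V(\phi_i)=1}\phi_i$, with one disjunct per satisfying valuation, and then prunes every conjunct that logically implies another, which requires a separate argument that pruning preserves the truth-table; you instead pass through the support dictionary and take the $\subseteq$-minimal elements of the up-set $\mathcal{U}_T$ from the start, so the pruning step and its justification never arise, replaced by the standard fact that a finite up-set is the up-closure of its minimal elements. For uniqueness, the paper appeals to antisymmetry of $\models$ on $\mathcal{L}$, which it imports from the corollary that $(\mathcal{L},\leftmodels)$ is a poset and hence, indirectly, from the isomorphism theorem and the Crampton--Loizou lattice result; your recovery of the antichain as the set of minimal elements of its own up-closure is self-contained and does not lean on any of that machinery, which is a genuine (if modest) gain in economy. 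Finally, your explicit handling of the two constant truth-tables addresses a point the paper silently glosses over: since $\mathcal{L}$ is the range of $\Psi$ on antichains of nonempty collections, neither $\top$ nor $\bot$ lies in $\mathcal{L}$, so the proposition as literally stated holds only for non-constant monotonic truth-tables (equivalently, those with value $0$ on the all-zeros valuation and $1$ on the all-ones valuation --- exactly the ones corresponding to parthood distributions), unless one adopts your convention of admitting the degenerate antichains; the paper's own construction of $\tilde{\alpha}^*$ would produce an empty disjunction or an empty conjunction in those two cases without comment. Flagging and resolving this edge case is a point in your proposal's favour.
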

\begin{proof}
See Appendix \ref{app:proofs_of_props}
\end{proof}

Now, it was shown in \cite{finn2018pointwise} that the information atoms have a closed form solution in terms of the \textit{meets} of any subset of children of the corresponding node in the lattice. The \textit{meet} (infimum) and \textit{join} (supremum) operations, however, have quite straightforward interpretations on $(\mathcal{L}, \leftmodels)$: The meet of two statements $\tilde{\alpha}$ and $\tilde{\beta}$ is the strongest statement logically weaker than both $\ta$ and $\tb$. Similarly, the join is the weakest statement logically stronger than both $\ta$ and $\tb$. The meet is logically equivalent (though not identical) to the \textit{disjunction} of $\ta$ and $\tb$ while the join is logically equivalent (though not identical) to their \textit{conjunction}. The conjunction and disjunction of two elements of $\mathcal{L}$ do generally not lie in $\mathcal{L}$ because they do not necessarily have the appropriate form (disjunction of logically independent conjunctions). However, this can easily be remedied because both the disjunction and the conjunction of elements of $\mathcal{L}$ have monotonic truth-tables. Thus, by Proposition \ref{prop:mon_tt} there is a unique element in $\mathcal{L}$ with the same truth-table in both cases. These elements are therefore the meet and join. The detailed construction of meet and join operators is presented in Appendix \ref{app:proofs_of_props}.

Let us now turn to the notions of child and parent. A \textit{child} of a statement  $\tilde{\alpha} \in \mathcal{L}$ is a strongest statement strictly weaker than $\tilde{\alpha}$. Similarly, a \textit{parent} of $\tilde{\alpha}$ is a weakest statement strictly stronger than $\tilde{\alpha}$. The following three propositions provide, first, a characterization of children in terms of their truth tables, second, a lower bound on the number of children of a statement, and third, an algorithm to determine all children of a statement. Due to the isomophism of antichains, parthood distributions, and logical statements, the propositions can be utilized to study any of these three structures.
\begin{prop}[Characterization of Children]\label{prop:children_characterization} \label{prop:char_children}
	$\tilde{\gamma}\in \mathcal{L}$ is a direct child of $\tilde{\alpha} \in \mathcal{L}$ if and only if $\tilde{\gamma}$ is true in all cases in which $\tilde{\alpha}$ is true plus exactly one additional case, i.e. just in case $T_{\tilde{\alpha}}(V) = 1 \rightarrow T_{\tilde{\gamma}}(V) = 1$ and $!\exists V \in \mathcal{V}: T_{\tilde{\gamma}}(V) = 1 \wedge T_{\tilde{\alpha}}(V) = 0$.
\end{prop}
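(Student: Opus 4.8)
The plan is to reduce the entire statement to a combinatorial fact about monotone subsets of the Boolean cube by invoking Proposition~\ref{prop:mon_tt}. By that proposition each $\tilde{\alpha} \in \mathcal{L}$ is determined, up to logical equivalence, by its truth-table $T_{\tilde{\alpha}}$, and the set $U_{\tilde{\alpha}} := \{V \in \mathcal{V} \mid T_{\tilde{\alpha}}(V) = 1\}$ of its models is \emph{monotonic}, i.e.\ an up-set with respect to the coordinatewise order $V \le V'$ (defined by $V(\phi_i) = 1 \Rightarrow V'(\phi_i) = 1$ for all $i$). Logical implication translates into inclusion of model sets: $\tilde{\alpha} \models \tilde{\gamma}$ iff $U_{\tilde{\alpha}} \subseteq U_{\tilde{\gamma}}$. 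Hence ``$\tilde{\gamma}$ is weaker than $\tilde{\alpha}$'' becomes ``$U_{\tilde{\alpha}} \subsetneq U_{\tilde{\gamma}}$'', and ``$\tilde{\gamma}$ is a \emph{direct} child of $\tilde{\alpha}$'' becomes ``$U_{\tilde{\gamma}}$ covers $U_{\tilde{\alpha}}$ in the inclusion order on up-sets''. In this language the two truth-table conditions together say precisely that $U_{\tilde{\gamma}} = U_{\tilde{\alpha}} \cup \{V_0\}$ for a unique $V_0 \notin U_{\tilde{\alpha}}$, so the proposition amounts to the claim that covers in the lattice of up-sets are exactly single-element extensions.

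The ($\Leftarrow$) direction should be immediate. Assuming $U_{\tilde{\gamma}} = U_{\tilde{\alpha}} \cup \{V_0\}$ with $V_0 \notin U_{\tilde{\alpha}}$, we get $U_{\tilde{\alpha}} \subsetneq U_{\tilde{\gamma}}$, so $\tilde{\gamma}$ is strictly weaker; and any up-set $U$ with $U_{\tilde{\alpha}} \subseteq U \subseteq U_{\tilde{\alpha}} \cup \{V_0\}$ must equal one of these two sets, so nothing lies strictly between and $\tilde{\gamma}$ is a direct child.

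For the ($\Rightarrow$) direction I would assume $\tilde{\gamma}$ is a direct child, so that $U_{\tilde{\alpha}} \subsetneq U_{\tilde{\gamma}}$ (which already gives the first condition $T_{\tilde{\alpha}}(V) = 1 \Rightarrow T_{\tilde{\gamma}}(V) = 1$) and the difference $D := U_{\tilde{\gamma}} \setminus U_{\tilde{\alpha}}$ is nonempty; the task is to show $|D| = 1$. Since $\mathcal{V}$ is finite I would pick a $\le$-minimal element $V_0$ of $D$ and then verify that $U_{\tilde{\alpha}} \cup \{V_0\}$ is again an up-set: for any $V' > V_0$ we have $V' \in U_{\tilde{\gamma}}$ because $U_{\tilde{\gamma}}$ is upward closed and contains $V_0$; moreover $V' \in U_{\tilde{\alpha}}$, for otherwise $V'$ would be a strict successor of $V_0$ lying in $D$, contradicting minimality. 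Together with the up-set property of $U_{\tilde{\alpha}}$ this makes $U_{\tilde{\alpha}} \cup \{V_0\}$ upward closed, so by Proposition~\ref{prop:mon_tt} it equals $U_{\tilde{\delta}}$ for some $\tilde{\delta} \in \mathcal{L}$ with $U_{\tilde{\alpha}} \subsetneq U_{\tilde{\delta}} \subseteq U_{\tilde{\gamma}}$. The cover property then forces $U_{\tilde{\delta}} = U_{\tilde{\gamma}}$, whence $D = \{V_0\}$, which is exactly the ``plus exactly one additional case'' condition.

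The main obstacle is the single verification that $U_{\tilde{\alpha}} \cup \{V_0\}$ remains monotonic: this is the one place where the monotonicity supplied by Proposition~\ref{prop:mon_tt} is genuinely used, and it is the minimality of $V_0$ in $D$ that makes it work. Everything else is bookkeeping about the inclusion order, so the proof hinges on faithfully transporting ``weaker'', ``child'', and ``additional case'' across the isomorphism of Proposition~\ref{prop:mon_tt} and on the existence of a $\le$-minimal element of $D$ in the finite cube.
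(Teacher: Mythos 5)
Your reduction to up-sets of the Boolean cube is faithful, and your ($\Leftarrow$) direction is correct (and in fact cleaner than the paper's, which handles that direction by contraposition too). But the ($\Rightarrow$) direction contains a genuine error: you choose a $\le$-\emph{minimal} element $V_0$ of $D = U_{\tilde{\gamma}} \setminus U_{\tilde{\alpha}}$ and justify the up-set property of $U_{\tilde{\alpha}} \cup \{V_0\}$ by saying that a strict successor of $V_0$ lying in $D$ would ``contradict minimality''. It would not: minimality of $V_0$ only forbids elements of $D$ strictly \emph{below} $V_0$; elements of $D$ strictly \emph{above} $V_0$ are perfectly compatible with it, and they are exactly what destroys upward closure. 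Concretely, take $n=3$, $\tilde{\alpha} = \phi_1 \wedge \phi_2 \wedge \phi_3$, $\tilde{\gamma} = \phi_1$, and write valuations as bit strings: then $D = \{100, 101, 110\}$, its unique minimal element is $V_0 = 100$, and $U_{\tilde{\alpha}} \cup \{V_0\} = \{111, 100\}$ is not an up-set, since $110 > 100$ is missing. So the set you hand to Proposition \ref{prop:mon_tt} need not correspond to any element of $\mathcal{L}$, and the cover argument collapses at that point.

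The repair is a one-word dualization: take $V_0$ to be a $\le$-\emph{maximal} element of $D$ (or one with the largest number of ones). Then any $V' > V_0$ lies in $U_{\tilde{\gamma}}$ by upward closure but not in $D$ by maximality, hence lies in $U_{\tilde{\alpha}}$, so $U_{\tilde{\alpha}} \cup \{V_0\}$ is an up-set and the rest of your argument goes through verbatim. With this fix your proof is a valid dual of the paper's: the paper works on the other side of the cover, deleting from $U_{\tilde{\gamma}}$ an element of $D$ with the \emph{fewest} ones (which is what keeps $U_{\tilde{\gamma}} \setminus \{V_*\}$ monotone) and proving both directions by contraposition, whereas you adjoin a maximal element of $D$ to $U_{\tilde{\alpha}}$ and invoke the cover property directly. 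Note the symmetry: deletion requires a minimal element, adjunction requires a maximal one; your proposal pairs adjunction with minimality, which is precisely the combination that fails.
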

\begin{proof}
See Appendix \ref{app:proofs_of_props}
\end{proof}
\begin{prop}[Lower bound on number of children]\label{prop:lower_bound_children}
	Any $\alpha \in \mathcal{A}$ such that there is at least one $\mathbf{a} \in \alpha$ with $|\mathbf{a}| = k \geq 1$ has at least k children.
\end{prop}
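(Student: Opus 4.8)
The plan is to count children in the truth-table picture and transport the result through the isomorphism $(\mathcal{L}_n,\leftmodels)\cong(\mathcal{A}_n,\preceq)\cong(\mathcal{B}_n,\sqsubseteq)$ (Theorem~1, Corollary~\ref{cor:pos_lat}), so that "children of $\alpha$" may equivalently be counted as children of $\ta=\Psi(\alpha)$. Under the identification of a valuation $V$ with the set $U=\{i:V(\phi_i)=1\}$, the truth table of $\ta=\bigvee_{\mathbf{a}\in\alpha}\bigwedge_{i\in\mathbf{a}}\phi_i$ satisfies $T_{\ta}(U)=1$ exactly when $U$ contains some member of $\alpha$; this is precisely the parthood distribution $f_\alpha$, so the false valuations of $\ta$ are exactly the sets $U$ containing no member of $\alpha$.

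First I would recast Proposition~\ref{prop:char_children} as a statement about false valuations. By that proposition a direct child $\tc$ of $\ta$ keeps the true-set of $\ta$ and turns exactly one further valuation $U$ true; for the result to lie in $\mathcal{L}$ its truth table must be monotone (Proposition~\ref{prop:mon_tt}), and adjoining a single $U$ to an up-set leaves it an up-set iff every strict superset of $U$ is already true, i.e.\ iff $U$ is a \emph{maximal} false valuation. Distinct maximal false valuations give non-equivalent statements, again by Proposition~\ref{prop:mon_tt}. Thus children of $\ta$ are in bijection with the maximal elements of $\{U : U \text{ contains no member of } \alpha\}$, and it suffices to produce $k$ distinct such maximal sets.

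Then I would build them from the witnessing collection. Fix $\mathbf{a}\in\alpha$ with $\mathbf{a}=\{i_1,\ldots,i_k\}$. For each $j$, the set $\mathbf{a}\setminus\{i_j\}$ contains no member of $\alpha$: it is a proper subset of $\mathbf{a}$, and the antichain property forbids any other collection of $\alpha$ from lying inside $\mathbf{a}$; hence it is false. Greedily adjoining indices that preserve falseness until none can be added yields a maximal false valuation $M_j\supseteq\mathbf{a}\setminus\{i_j\}$. Necessarily $i_j\notin M_j$, for otherwise $\mathbf{a}\subseteq M_j$ would make $M_j$ true. Distinctness is then immediate: for $j\neq j'$ one has $i_j\in\mathbf{a}\setminus\{i_{j'}\}\subseteq M_{j'}$ while $i_j\notin M_j$, so $M_j\neq M_{j'}$. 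This exhibits $k$ distinct maximal false valuations, hence $k$ distinct children.

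The main obstacle is resisting the tempting but wrong construction $M_j=\{1,\ldots,n\}\setminus\{i_j\}$: this can already be true whenever $\alpha$ contains another collection avoiding $i_j$ (for instance $\alpha=\{\{1,2\},\{3\}\}$, where $\{2,3\}$ makes $\ta$ true). The fix is to anchor each candidate on $\mathbf{a}\setminus\{i_j\}$, use the antichain property to guarantee falseness, and only then extend; maximality automatically keeps $i_j$ out, which is exactly what secures distinctness. The remaining points—that greedy saturation yields a poset-maximal false valuation, and that the children/maximal-false-valuation correspondence is a genuine bijection—are routine and follow directly from Propositions~\ref{prop:char_children} and~\ref{prop:mon_tt}.
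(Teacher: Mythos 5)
Your proof is correct and follows essentially the same route as the paper's: both transport the problem to truth tables via the isomorphism, anchor the $j$-th candidate child on the false valuation $\mathbf{a}\setminus\{i_j\}$ (using the antichain property), extend it to a maximal false valuation so that adding it preserves monotonicity, and invoke Propositions \ref{prop:mon_tt} and \ref{prop:char_children} to obtain $k$ distinct children. Your only departure is presentational — packaging the child/maximal-false-valuation correspondence as an explicit bijection and giving a cleaner distinctness argument ($i_j\in M_{j'}$ but $i_j\notin M_j$) than the paper's terse remark — but the construction is the same.
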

\begin{proof}
See Appendix \ref{app:proofs_of_props}
\end{proof}
\begin{prop}[Algorithm to determine children] \label{prop:algorithm_children}
The children of a statement $\tilde{\alpha}$ can be determined via the following algorithm (for a pseudocode version see Appendix \ref{app:proofs_of_props}). It proceeds in three steps:

\begin{enumerate}
\item Set k to the maximal number of ones occurring in a valuation that does not satisfy $\tilde{\alpha}$.
\item For each valuation $V$ that does not satisfy $\tilde{\alpha}$ and contains k ones do the following:
\begin{enumerate}
\item Check if there is a valuation with k+1 ones that does not satisfy $\tilde{\alpha}$  and results from flipping one or multiple zeros in $V$ to one, i.e. a model $V^\prime$ such that $V(\phi_i) = 1 \rightarrow V^\prime(\phi_i) = 1$. If there is such a valuation, then skip step b). Otherwise, proceed.
\item Create a new monotonic truth-table by setting $V$ to one, otherwise leaving the truth-table of $\tilde{\alpha}$ unchanged. The statement corresponding to this truth-table is a child of $\tilde{\alpha}$.
\end{enumerate}
\item If $k>0$, decrease k by 1 and repeat Step 2. Otherwise, terminate.
\end{enumerate}
\end{prop}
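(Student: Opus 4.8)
The plan is to reformulate the claim as a purely order-theoretic statement about \emph{maximal falsifying valuations} and then check that the algorithm enumerates exactly these. Order valuations coordinatewise, writing $V \le V'$ for $V(\phi_i)=1 \Rightarrow V'(\phi_i)=1$ for all $i$, and set $N_{\ta} := \{V \in \mathcal{V} \mid T_{\ta}(V) = 0\}$, the valuations that falsify $\ta$. Since $\ta \in \mathcal{L}$ has a monotonic truth-table by Proposition \ref{prop:mon_tt}, the set of satisfying valuations is an up-set, so $N_{\ta}$ is a down-set. First I would prove the reformulation: \emph{the direct children of $\ta$ correspond one-to-one to the maximal elements of $N_{\ta}$}. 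By Proposition \ref{prop:char_children} a child $\tc$ has a truth-table agreeing with $T_{\ta}$ except that it is true on exactly one extra valuation $V$ (where $T_{\ta}(V)=0$); and since $\tc\in\mathcal{L}$, Proposition \ref{prop:mon_tt} forces the flipped truth-table to be monotonic. So the question becomes: for which $V\in N_{\ta}$ does setting $T_{\ta}(V)$ to $1$ preserve monotonicity?

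The key order-theoretic fact I would establish is that removing a single point $V$ from the down-set $N_{\ta}$ yields a down-set if and only if $V$ is maximal in $N_{\ta}$. If $V$ is not maximal, some $W\in N_{\ta}$ satisfies $V<W$, and then $W$ stays falsifying while $V\le W$ is now true, breaking down-closure (hence monotonicity); if $V$ is maximal, nothing in $N_{\ta}$ lies strictly above $V$, so down-closure survives. This gives the claimed correspondence, with one boundary caveat to record: the flip produces the constant-true table precisely when $N_{\ta}=\{V\}$, which (since the all-zeros valuation always falsifies $\ta$ and is the global minimum) happens only for $\ta=\phi_1\vee\cdots\vee\phi_n$. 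This element is the weakest member of $\mathcal{L}$ and has no children, matching the fact that the tautology lies outside $\mathcal{L}$; in every other case a maximal falsifying valuation differs from all-zeros, so the flipped table is non-constant and, by Proposition \ref{prop:mon_tt}, names a genuine element of $\mathcal{L}$.

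It then remains to show the three-step procedure outputs exactly the maximal elements of $N_{\ta}$, each flipped to a child in step 2(b). The lemma to prove here is that a falsifying $V$ is maximal in $N_{\ta}$ if and only if none of its immediate successors---the valuations obtained by flipping a single zero to one, which carry one extra one---falsifies $\ta$. The nontrivial direction again uses down-closure: if some strict successor $W>V$ falsifies $\ta$, pick an immediate successor $V'$ with $V<V'\le W$; then $V'\le W\in N_{\ta}$ forces $V'\in N_{\ta}$, so a falsifying immediate successor already exists. This equivalence is exactly the test in step 2(a), so step 2(b) fires on precisely the maximal falsifying valuations. Running $k$ from the largest number of ones attained by a falsifying valuation (at which level every falsifying valuation is automatically maximal) down to $0$ scans every element of $N_{\ta}$ once, establishing completeness and termination.

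I expect the main obstacle to be the passage from ``$V$ has no falsifying \emph{strict} successor'', which is what maximality and hence monotonicity of the flipped table genuinely require, to ``$V$ has no falsifying successor at the single next level $k+1$'', which is all step 2(a) inspects. Making this rigorous is exactly where the down-set structure of $N_{\ta}$ is indispensable, collapsing a check over all successors to a check over immediate ones. The remaining pieces---the level-by-level bookkeeping, the boundary behaviour of $\phi_1\vee\cdots\vee\phi_n$, and termination at $k=0$---are routine once this reduction is in hand.
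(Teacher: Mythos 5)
Your proof is correct, and it is actually tighter than the paper's own argument. The paper's proof has the same two-part skeleton you use --- soundness (everything the algorithm outputs is a single-flip of the truth-table of $\tilde{\alpha}$, hence a child by Proposition \ref{prop:char_children}) and completeness (every child is a single-flip by Proposition \ref{prop:char_children}, and the algorithm tries all single flips) --- but it treats as self-evident precisely the point you isolate as the crux: that the level-$(k{+}1)$ test in step 2(a) is equivalent to genuine maximality of $V$ in the falsifying set, which is what makes the flipped table monotonic and hence, by Proposition \ref{prop:mon_tt}, the truth-table of an actual element of $\mathcal{L}$ to which Proposition \ref{prop:char_children} can be applied. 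Your down-set reformulation supplies the two lemmas the paper leaves implicit: removing a point from a down-set leaves a down-set iff the point is maximal, and maximality in a down-set can be certified by inspecting only immediate successors (if some falsifying $W > V$ exists, down-closure pushes a falsifying valuation into the level directly above $V$). Without these, the paper's statement that the algorithm ``explores systematically all possibilities to add a single one'' shows only that every child is considered, not that the check accepts exactly the right valuations. Your boundary observation is also a genuine refinement: for the bottom element $\phi_1\vee\cdots\vee\phi_n$ the unique falsifying valuation is all-zeros, step 2(a) passes, and the flip produces the constant-true table, which is the truth-table of no element of $\mathcal{L}$; the algorithm as literally stated would report a spurious child there, a case the paper's proof silently ignores. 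What the paper's brevity buys is economy --- it reads monotonicity of the constructed table directly off the phrasing of step 2(b) --- but your version is the one that actually discharges that assertion.
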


\begin{proof}
See Appendix \ref{app:proofs_of_props}
\end{proof}
This concludes our discussion of the relationship between formal logic and PID. In the next section we return to the parthood side of our story. In particular, we will address an apparent arbitrariness in the argument presented in \S\ref{sec:parthood}c. Here we showed that the sizes of the atoms of information can be obtained once a measure of redundant information is specified. Now, one may ask of course: why redundant information? Couldn't the same purpose be achieved by utilizing some other informational quantity such as synergistic or unique information? We will now discuss how the parthood approach can help answering this question in a systematic way.

\section{Non-Redundancy based PIDs}\label{sec:discussion_non_red_PID}

Let us briefly revisit the structure of the argument in  \S\ref{sec:parthood}c. It involved three steps (presented in slightly different order above): First, based on the very concept of redundant information, we phrased a condition describing which atoms are part of which redundancies (Core Principle \ref{prc:parthood_criterion_red}). Secondly, we showed that this parthood criterion entails a number of contraints on the measure $I_\cap$. Finally, we showed that, as long as these constraints are satisfied, we obtain a unique solution for the atoms of information. There is actually a fourth step: We would have to check that the information decomposition satisfies the consistency equations relating atoms to mutual information terms (Equation \ref{eq:quant_relation_atoms_MI}). However, in the case of redundant information this condition is trivially satisfied due to the self-redundancy property. In other words, the consistency equations are themselves part of the system of equations used to solve for the information atoms.   

In order to obtain an information decomposition based on a quantity other than redundant information, lets call it $I^*(T:\bfa_1,\ldots,\bfa_m)$, we may use precisely the same scheme:

\begin{enumerate}
    \item Define a condition $\mathcal{C}(f:\bfa_1,\ldots,\bfa_m)$ on parthood distributions $f$ describing which atoms $\Pi(f)$ are part of $I^*(T:\bfa_1,\ldots,\bfa_m)$ for any given tuple of collections of sources $\bfa_1,\ldots,\bfa_m$. This leads to a system of equations:
    \begin{equation}
        I^*(T:\bfa_1,\ldots,\bfa_m) = \sum_{\mathcal{C}(f:\bfa_1,\ldots,\bfa_m)} \Pi(f)
    \end{equation}
    \item Analyse which constraints on $I^*(T:\bfa_1,\ldots,\bfa_m)$ (e.g. symmetry, idempotency, \dots) are implied by this relationship.
    \item Show that given a choice of $I^*(T:\bfa_1,\ldots,\bfa_m)$ that satisfies the constraints, a unique solution for all information atoms $\Pi(f)$ can be obtained.
    \item Show that the solution satisfies the consistency equation
    \eqref{eq:quant_relation_atoms_MI} relating information atoms and mutual information terms.
\end{enumerate}

Let us work through these steps in specific cases.

\subsection{Restricted Information PID}

Recall that the redundant information of multiple collections of sources is the information we obtain \textit{if} we have access to any of the collections. Similarly, we can define the information ``restricted by'' collections of sources $\bfa_1,\ldots,\bfa_m$ as any information we obtain \textit{only if} we have access to at least one of the collections.  For instance, assuming $n=2$, the information restricted by the first source consists of its unique information and its synergy with the second source. Both of these quantities can only be obtained if we have access to the first source.

Thus, in general the restricted information $I_\text{res}(T:\mathbf{a}_1,\ldots,\mathbf{a}_m)$ should consist of all the atoms that are \textit{only} part of the information carried by some of the $\mathbf{a}_i$ \textit{but not part of the information provided by any other collection of sources}. Thus the parthood condition $\mathcal{C}_\text{res}$ is given by
\begin{equation}
\mathcal{C}_\text{res}(f:\bfa_1,\ldots,\bfa_m) \Leftrightarrow (f(\mathbf{b})=1 \rightarrow \exists i: \mathbf{b} \supseteq \mathbf{a}_i)
\end{equation}
and we obtain the relation
\begin{equation}\label{eq:quant_relation_atoms_restr}
I_\text{res}(T:\mathbf{a}_1,\ldots,\mathbf{a}_m) = \sum\limits_{\mathcal{C}_\text{res}(f:\bfa_1,\ldots,\bfa_m) } \Pi(f)
\end{equation}
Just as in the case of redundant information, this relationship implies a number of invariance properties for $I_\text{res}$: it has to be symmetric, idempotent, and invariant under superset removal/addition allowing us again to restrict ourselves to the set of antichains. The analogue of the "self-redundancy" property is that the restricted information of a collection of singletons ${I_\text{res}(T:\{i_1\},\ldots,\{i_m\})}$ is equal to the  \textit{conditional mutual information provided by their union $\alpha_\cup  = \bigcup_{j=1}^m \{i_j\}$ conditioned on all other sources}. So if $\alpha = \{\{i_1\},\ldots,\{i_m\}\}$ is a collection of singletons, then:
\begin{equation}\label{eq:restr_info_cond_MI}
I_\text{res}(T:\alpha) = I\left(T:(S_i)_{i\in \alpha_\cup} | (S_j)_{j \in \alpha_\cup^C}\right)
\end{equation}
This can be established using the chain rule for mutual information as detailed in Appendix \ref{app:restr_info}. The next step is to show that we may obtain a unique solution for the information atoms once a measure of restricted information satisfying these conditions is given. This can be achieved in much the same way as for redundant information. The restricted information associated with an antichain $\alpha$ can be expressed as a sum of information atoms $\Pi(\beta)$ below and including $\alpha$ in a specific lattice of antichains $(\mathcal{A},\preceq^\prime)$. This lattice is simply the dual (inverted version) of the antichain lattice $(\mathcal{A},\preceq)$, i.e.
\begin{equation}
\alpha \preceq^\prime \beta \Leftrightarrow  \beta \preceq \alpha
\end{equation}
Accordingly, a unique solution is guaranteed via Moebius-Inversion of the relationship
\begin{equation}\label{eq:moebius_restr_info}
I_\text{res}(T:\alpha) = \sum_{\beta \preceq^\prime \alpha} \Pi_\text{res}(\alpha)
\end{equation}
As a final step we need to show that the resulting atoms stand in the appropriate relationships to mutual information terms. These relationships are given by the consistency equation \eqref{eq:quant_relation_atoms_MI}. Again using the chain rule it can be shown that this equation is equivalent to a condition relating conditional mutual information to the information atoms:
\begin{equation}
I(T:\mathbf{a}) = \sum\limits_{f(\mathbf{a})= 1} \Pi(f) \hspace{0.5cm}
\Leftrightarrow  \hspace{0.5cm}
I(T:\mathbf{a}|\mathbf{a}^C) = \sum\limits_{f(\mathbf{a}^C) = 0} \Pi(f)
\end{equation}
Now consider any collection of source indices ${\bfa = \{j_1,\ldots,j_m\}}$, then we obtain
\begin{align}
 I\left(T:\bfa| \bfa^C\right) &\stackrel{Eq. \eqref{eq:restr_info_cond_MI}}{=}   I_\text{res}(T:\{j_1\},\ldots,\{j_m\})  \\
&\stackrel{Eq. \eqref{eq:quant_relation_atoms_restr}}{=} \sum\limits_{f(\mathbf{b})=1 \rightarrow \exists i: \mathbf{b} \supseteq \{j_i\}} \Pi_\text{res}(f) \\
&= \sum\limits_{f(\bfa^C)=0} \Pi_\text{res}(f)
\end{align}
where the last equality follows because in the case of singletons the parthood condition $\mathcal{C}_{\text{res}}$ reduces to ${f(\alpha_\cup^C)=0}$. This establishes that the resulting atoms satisfy the consistency condition and we obtain a valid PID. 
In the following section we will use the same approach to analyse the question of whether a synergy based PID is possible.

\subsection{Synergy based PID}

Note that the restricted information of multiple collections of sources stands in a direct correspondence to a weak form of synergy which we will denote by $I_\text{ws}(T:\mathbf{a}_1,\ldots,\mathbf{a}_m)$. This quantity is to be understood as \textit{the information about the target we cannot obtain from any individual collection} $\bfa_i$. Accordingly, the parthood criterion is
\begin{equation}
\mathcal{C}_\text{ws}(f:\bfa_1,\ldots,\bfa_m) \Leftrightarrow ( \forall i \in \{1,\ldots,m\}: f(\bfa_i)=0 )
\end{equation}
But this information is of course the same as the information that we can get only if some \textit{other} collections are known (except subcollections of course), i.e. 
\begin{equation}
I_\text{ws}(T:\bfa_1,\ldots,\bfa_m) = I_\text{res}(T:(\mathbf{b} \mid \forall i\mathbf{b} \not\subseteq \bfa_i ))
\end{equation}
Consider the case of two sources: the information we cannot get from source 2 alone, $I_\text{ws}(T:\{2\})$, is the same as the information we can get only if the first source is known, ${I_\text{res}(T:\{1\})}$: unique information of source 1 plus synergistic information.

Due to this correspondence, the argument presented above can also be used to show that a consistent PID can be obtained by fixing a measure $I_\text{ws}$ of weak synergy. Once such a measure is given we can first translate it to the corresponding restricted information terms and then perform the Moebius inversion of Equation \eqref{eq:moebius_restr_info} (alternatively, the above argument could be redeveloped directly for $I_\text{ws}$ with minor modifications)

 Interestingly, if we associate with every antichain $\alpha$ in the lattice ${(\mathcal{A},\preceq)}$ the corresponding ${I_{\mathrm{ws}}(T:\beta)}$ (so that ${I_{\mathrm{res}}(T:\alpha) = I_{\mathrm{ws}}(T:\beta)}$), then the $\beta$ form an isomorphic lattice but with a different ordering (see Figure \ref{fig:redundancy_vs_constraint_lattice}). Just as the original antichain lattice this structure on the antichains has been introduced by Crampton and Loizou \cite{Crampton2000}. 
 
In the PID field a restricted version of this lattice (i.e. restricted to a certain subset of antichains) has been described by \cite{James2019} and \cite{Ay2019} under the name ``constraint lattice''. This terms is also appropriate in the present context: Intuitively, if we move up the constraint lattice we encounter information that satisfies more and more constraints. First, all of the information in the sources ($I_\text{ws}(T:\emptyset)$). This is the case of no constraints. Then all the information that is not contained in a particular individual source ($I_\text{ws}(T:\{1\})$ and $I_\text{ws}(T:\{2\})$). And finally the information that is not contained in any individual source ($I_\text{ws}(T:\{1\},\{2\})$) . 

Most recently, the full version of the lattice (i.e.~ defined on all antichains) has been utilized by \cite{Rosas2020synergy} to formulate a synergy centered information decomposition. They call the lattice \textit{extended constraint lattice} and define "synergy atoms" $S_{\partial}$ in terms of a Moebius-Inversion over it. The concept of synergy $S^{\boldsymbol{\alpha}}$  utilized in this approach closely resembles what we have called weak synergy. However, the decomposition is \textit{structurally different} from the type of decomposition discussed here and generally assumed in previous work on PID. Even though it leads to the same number of atoms, these atoms do not stand in the expected relationships to mutual information. For instance, in the 2-sources case, there is no pair of atoms that necessarily adds up to the mutual information provided by the first source and no such pair of atoms for the second source. The consistency equation (\ref{eq:quant_relation_atoms_MI}) is not satisfied (except for the full set of sources). This means that synergy atoms $S_{\partial}$ are not directly comparable to standard PID atoms $\Pi$. They represent different types of information.

\begin{figure}
\centering
\includegraphics[width=0.4\textwidth]{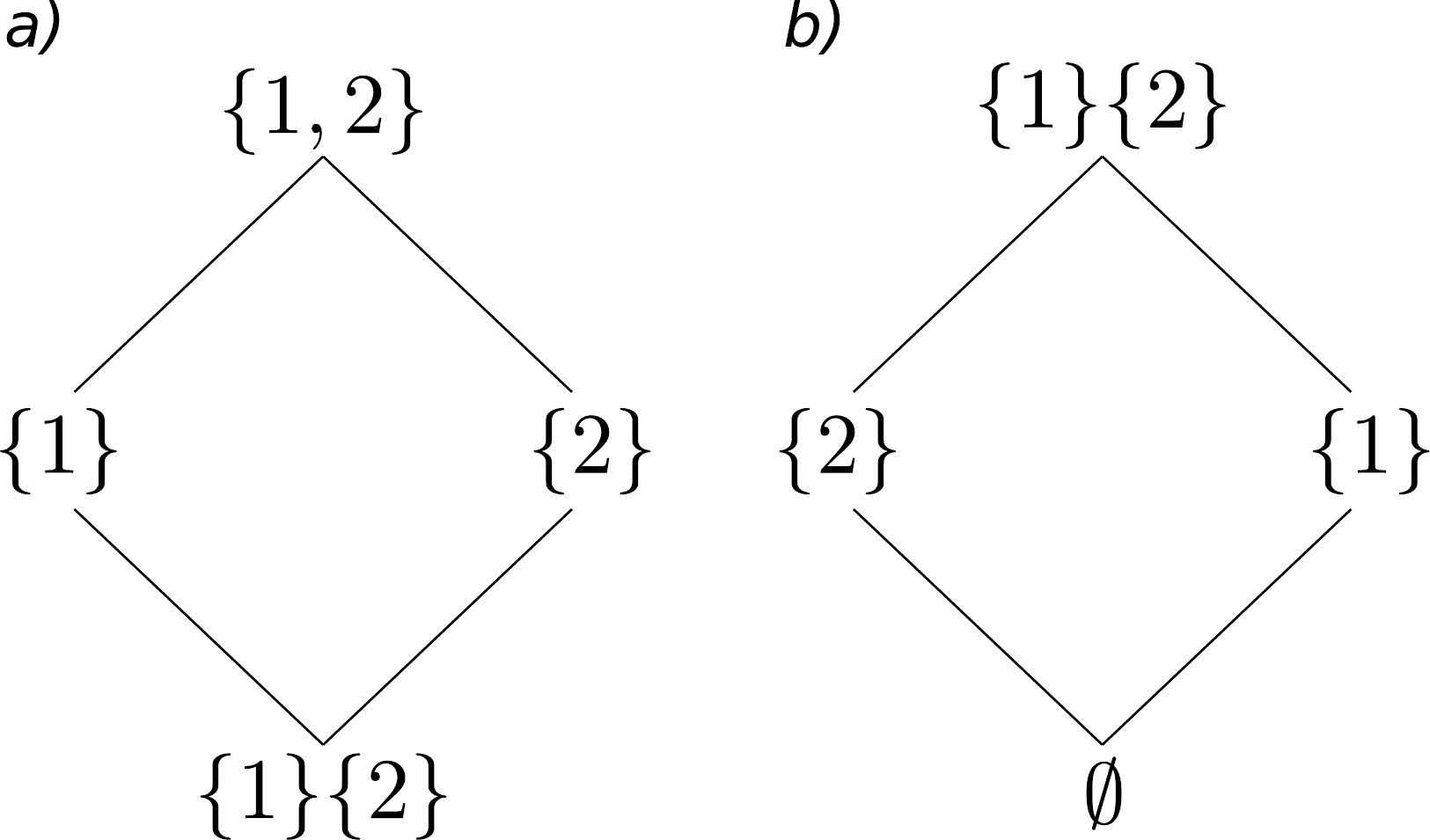}
\caption{(a) antichain lattice $(\mathcal{A}_2,\preceq)$ for two sources. Summing up the atoms \textit{above} and including a node yields the restricted information of that node. (b) extended constraint lattice for two sources. The weak synergy associated with a node in the extended constraint lattice is the sum of atoms above and including the corresponding node in the left lattice. Note that following a widespread convention we left out the outer curly brackets around the antichains.}
\label{fig:redundancy_vs_constraint_lattice}
\end{figure}

Let us now move towards stronger concepts of synergistic information. The reason for the term "weak" synergy is that a key ingredient of synergy seems to be missing in its definition: intuitively, the synergy of multiple sources is the information that cannot be obtained from any individual source but that becomes "visible" once we know all the sources at the same time. However, the definition of weak synergy only comprises the first part of this idea. The weak synergy $I_\text{ws}(T:\bfa_1,\ldots,\bfa_m)$ also contains parts that do not become visible even if we have access to \textit{all} $\bfa_i$. For instance, given $n=3$, the weak synergy $I_\text{ws}(T:\{1\},\{2\})$ also contains the unique information of the third source $\Pi(\{3\})$ because this quantity is accessible from neither the first nor the second source.

So let us add this missing ingredient by strengthening the parthood criterion:
{\small
\begin{equation}
\mathcal{C}_\text{ms}(f:\bfa_1,\ldots,\bfa_m) \!\Leftrightarrow\! ( \forall i \in \{1,\ldots,m\}: f(\bfa_i)=0 \!\text{ \& }\! f(\alpha_\cup) = 1)
\end{equation}
}
We obtain a moderate type of synergy we denote by $I_\text{ms}(T:\bfa_1,\ldots\bfa_m)$. It has a nice geometrical interpretation: in an information diagram it corresponds to all atoms \textit{outside} of all areas associated with the mutual information carried by some $\bfa_i$ but \textit{inside} the area associated with the mutual information carried by the union of the $\bfa_i$ (see Figure \ref{fig:moderate_synergy}). Furthermore, we can immediately see that the parthood condition cannot be satisfied for individual collections $\bfa$ (it demands $f(\bfa)=0$ and $f(\bfa)=1$ at the same time). This makes intuitive sense because the synergy of an individual collection appears to be an ill-defined concept: at least two things have to come together for there to be synergy. We will get back to the case of individual collections below.

\begin{figure}
\centering
\includegraphics[width=0.55\textwidth]{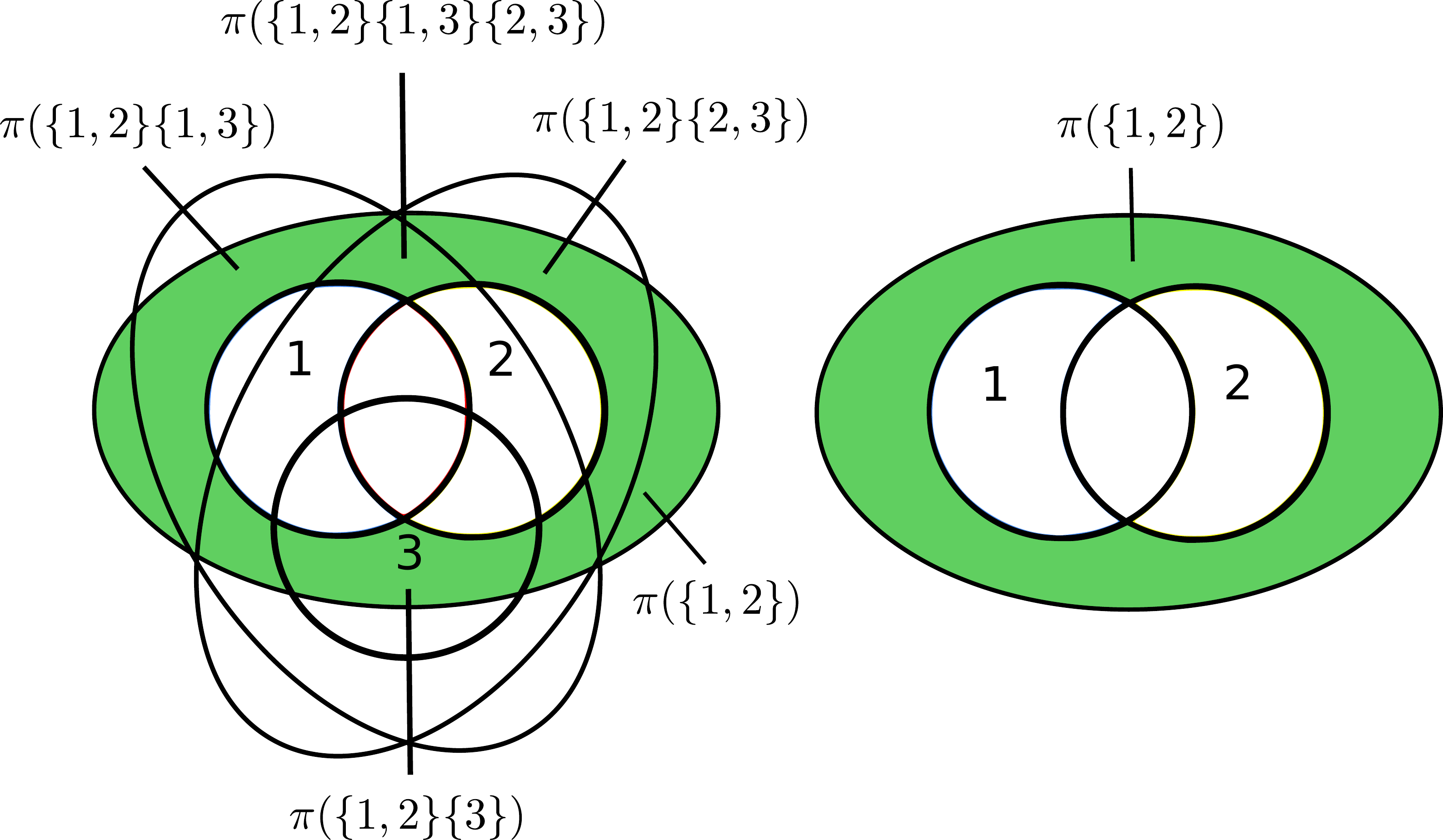}
\caption{Geometrical interpretation of moderate synergy $I_\text{ms}(T:\{1\},\{2\})$ for 2 and 3 sources.}
\label{fig:moderate_synergy}
\end{figure}
Let us first see what properties are implied by $\mathcal{C}_\text{ms}$. It can readily be shown that $I_\text{ms}$ is symmetric, idempontent, and invariant under \textit{subset} removal. This again allows us to restrict the domain of  $I_\text{ms}$ to the antichains. Additionally,  $I_\text{ms}$ satisfies the following condition:
\begin{equation}
\text{If } \exists i: \alpha_\cup = \bfa_i, \text{ then } I_\text{ms}(T:\alpha) = 0 \textbf{ (zero condition)}
\end{equation}
This property says that whenever the union of the collection happens to be equal to one of collections then the moderate synergy must be zero. This is in particular the case for the moderate "self-synergy" of a single collection. On first sight this raises a problem since the synergy equations associated with individual collections become trivial ($0=0$) and do not impose any constraints on the atoms. This situation can be remedied, however, by noting that these missing constraints are provided by the consistency equations relating the atoms to mutual information / conditional mutual information. In this way a unique solution for the atoms is indeed guaranteed (one could also axiomatically set the ``self-synergies'' to the respective conditional mutual information terms). The proof of this statement is given in Appendix \ref{app:restr_info}.  

An instructive fact about the moderate synergy based PID is that the underlying system of equations does not have the structure of a Moebius-Inversion over a lattice: there is no arrangement of atoms into a lattice such that each $I_\text{ms}(T:\alpha)$ turns out to be the sum of atoms below and including a particular lattice node. The reason is that any finite lattice always has a unique least element. In other words, some atom would have to appear at the very bottom of the lattice and would therefore be contained in all synergy terms. However, in the case of moderate synergy, there is no such atom for $n\geq 3$. The only viable candidate would be the overall synergy $\Pi(\{1,\ldots,n\})$. But due to the condition that the synergistic information has to become visible if we know all collections in question, this atom is not contained e.g. in $I_\text{ms}(T:\{1\},\{2\})$. 

Now one may wonder if the concept of synergy can be strengthened even further by demanding that the synergistic information should not be accessible from the \textit{union of any proper subset} of the collections in question. For instance, the synergistic information $I_\text{syn}(T:\{1\}\{2\}\{3\})$ of sources 1, 2, and 3 should not be accessible from the collections $\{1,2\}$, $\{1,3\}$, or $\{2,3\}$.  We have to know \textit{all three sources} to get access to their synergy. Thus, we may add this third constraint to obtain a strong notion of synergy we denote by $I_\text{syn}(T:\bfa_1,\ldots,\bfa_m)$. An atom $\Pi(f)$ should satisfy the corresponding parthood condition $\mathcal{C}_\text{syn}(f:\bfa_1,\ldots,\bfa_m)$ just in case 
{\small
\begin{enumerate}
    \item $f\left(\bigcup_{i=1}^m \mathbf{a}_i\right) = 1$ 
    \item $\forall i \in \{1,\ldots,m\}: f(\bfa_i) = 0$
    \item ${\forall J\subset \{1,\ldots,m\}, |J|\geq 2: \bigcup_{j \in J} \mathbf{a}_j \neq \bigcup_{i=1}^m \mathbf{a}_i \rightarrow f\left(\bigcup_{j \in J} \mathbf{a}_j\right) = 0}$
\end{enumerate}
}
The last condition is phrased as a conditional because the union of a proper subset of collection might happen to be equal to the union of all collections in question. Consider the case of three sources and the synergy $I_\text{syn}(T:\{1,2\}\{1,3\}\{2,3\})$. In this case the union of a proper subset of these collections, for instance $\{1,2\} \cup \{1,3\}$, happens to be equal to the union of all $\bfa_i$.

Unfortunately, we do not obtain enough linearly independent equations to uniquely determine the atoms of information. This can be shown using the example of three sources. According to the parthood criterion, $I_\text{syn}(T:\{1\}\{2\}\{3\}) = \Pi(\{1,2,3\})$. But also $I_\text{syn}(T:\{1,2\}\{1,3\}\{2,3\}) = \Pi(\{1,2,3\})$. This means that we do not obtain independent equations for each antichain. Or in linear algebras terms: our coefficient matrix will have two linearly dependent (actually identical) rows. Thus, a measure of strong synergy as described by $\mathcal{C}_\text{syn}$ cannot induce a unique PID.

\subsection{Unique information PID}
Let us briefly discuss the last obvious candidate quantity for determining the PID atoms: unique information \cite{bertschinger2014quantifying}. The appropriate parthood criterion for a measure of unique information $I_\mathrm{unq}$ seems straightforward in the case of individual collections $\mathbf{a}$: It should consist of all atoms that are part of the information provided by the collection $\mathbf{a}$ but not part of the information provided by any other collection. This is what makes this information ``unique'' to the collection. Since there is always just one such atom this means that $I_\mathrm{unq}(T:\mathbf{a}) =  \Pi(\mathbf{a})$. For instance, $I_\mathrm{unq}(T:\{1\}) =  \Pi(\{1\})$, as expected. However, defining $I_\mathrm{unq}$ only for individual collections does not yield enough equations to solve for the atoms. We need one equation per antichain / parthood distribution, and hence, some notion of the unique information associated with \textit{multiple} collections $\mathbf{a}_1,\ldots,\mathbf{a}_m$. This is a trickier question. What does it mean for information to be unique to these collections? Certainly, uniqueness demands that this information should not be contained in any \textit{other} collection. But what about the collections $\mathbf{a}_1,\ldots,\mathbf{a}_m$ themselves? It seems that the appropriate condition is that the unique information should consist of  atoms that are contained in \textit{all} of these collections. This idea aligns well with ordinary language: for instance, saying that a certain protein is unique to sheep and goats means that this protein is found in \textit{both sheep and goats and nowhere else}. Using this idea, the parthood criterion becomes
\begin{equation}
\mathcal{C}_\mathrm{unq}(f:\mathbf{a}_1,\ldots,\mathbf{a}_m) \Leftrightarrow \left(f(\mathbf{a}) = 1 \leftrightarrow \exists i: \mathbf{a} \supseteq \mathbf{a}_i \right)
\end{equation} 
However, this condition simply defines the atom $\Pi(\mathbf{a}_1,\ldots,\mathbf{a}_m)$ making the unique information based PID possible but maybe not very helpful: it just amounts to defining all the atoms separately because $I_\mathrm{unq}(T:\alpha) = \Pi(\alpha)$ for all antichains $\alpha$.

\section{Parthood descriptions vs. quantitative descriptions}\label{sec:discussion_parthood_vs_quant}
Before concluding we would like to briefly point out an issue that arises quite naturally when thinking about information theory from a parthood perspective and that merits a few remarks: throughout this paper we have drawn a distinction between \textit{parthood} relationships and \textit{quantitative} relationships between information contributions. In particular, Core Principles \ref{prc:core_principle_1} and \ref{prc:parthood_criterion_red} express parthood relationships between information atoms on the one hand and mutual information / redundant information on the other. Core Principle \ref{prc:wholes_are_sums_of_parts} by contrast describes the quantitative relationship between any information contribution and the parts it consists of. It is crucial to draw this distinction because these principles are logically independent. Consider the case of two sources: In this case, one could agree that the joint mutual information should consist of four parts while disagreeing that it should be the \textit{sum} of these parts. On other hand, one could agree that the joint mutual information should be the sum of its parts but disagree that it consists of four parts. 

The distinction between parthood relations and quantitative relations is also important in the argument that the redundant information provided by multiple statements is the information carried by the truth of their disjunction. One of the two motivations for this idea was based on the principle that the information provided by a statement $A$ is always \textit{part of} the information provided by any stronger statement $B.$ This does not mean however, that statement $A$ necessarily provides \textit{quantitatively} less information than $B$ (i.e. \textit{less bits} of information). In fact, this latter principle would contradict classical information theory. Here is why: suppose the pointwise mutual information ${i(t:s) = i(t:S=s)}$ is negative. Now, consider any \textit{tautology} such as ${S=s \vee \neg (S=s)}$. Certainly, this statement is \textit{logically weaker} than ${S=s}$ because a tautology is implied by any other statements. Furthermore, the probability of the tautology being true is equal to 1. Therefore, the information ${i(t:S=s \vee \neg (S=s))}$ provided by it is equal to 0. But this means ${i(t:S=s) < i(t:S=s \vee \neg (S=s))}$ even though ${S=s \vee \neg (S=s) \leftmodels S=s}$.

Nonetheless, there certainly is a sense in which a stronger statement $B$ provides ``more'' information than a weaker statement $A$: the information provided by $A$ is \textit{part of} the information provided by $B.$ If we know $B$ is true than we can by assumption infer that $A$ is true, and hence, we have access to all the information provided by $A.$ The fact that the stronger statement $B$ may nonetheless provide less bits of information can be explained in terms of misinformation: If we know $B$ is true, then we obtain all the information carried by $A$ \textit{plus some additional information}. If it happens that this surplus information is misinformative, i.e. negative, then quantitatively $B$ will provide less information than $A.$ This idea is illustrated in Figure \ref{fig:logical_monotonocity_principle_misinformative}.

\begin{figure}
\centering
\includegraphics[width=0.45\textwidth]{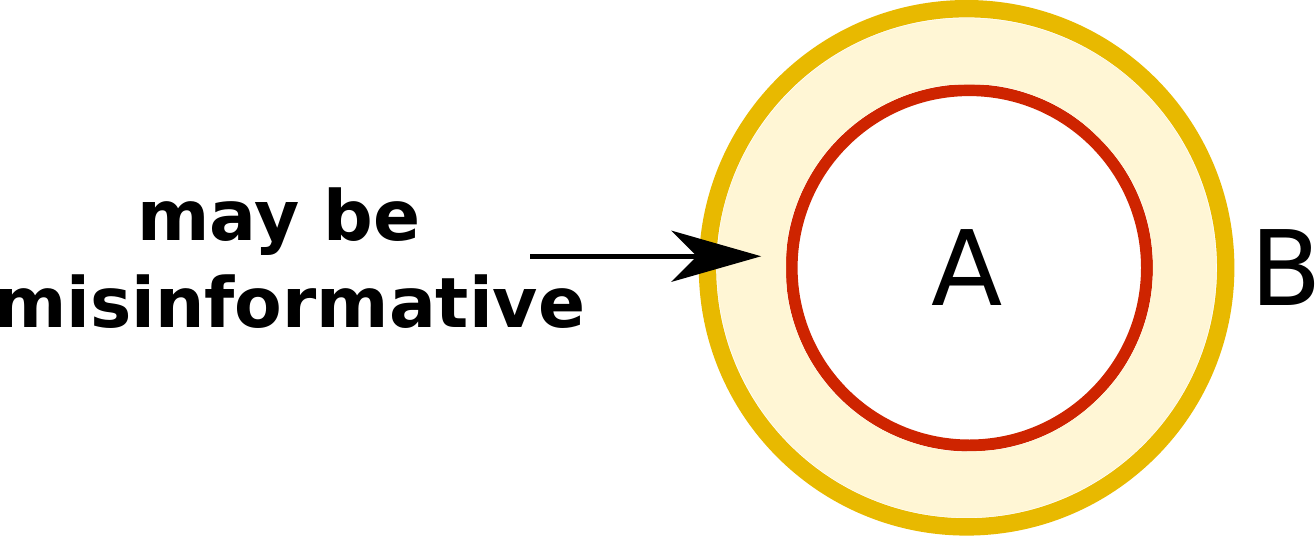}
\caption{Illustration of the idea that the information provided by a logically weaker statement A is always \textit{part of} the information of a stronger statement B, even though the latter may provide \textit{less bits} of information. This phenomenon can be explained in terms of the misinformative, i.e.~negative, contribution of the surplus information provided by B (the shaded ring).}
\label{fig:logical_monotonocity_principle_misinformative}
\end{figure}

Importantly, the possible negativity and non-monotonicity of  $i_\cap^\text{sx}$ as well as the potential negativity of $\pi^\text{sx}$ can be \textit{completely} explained in terms of misinformative contributions in the following sense: it is possible \cite{finn2018probability} to uniquely separate $i_\cap^\text{sx}$ into an informative part $i_\cap^\text{sx+}$ and a misinformative part $i_\cap^\text{sx-}$ such that 
\begin{equation}
i_\cap^\text{sx}(t:\alpha) = i_\cap^\text{sx+}(t:\alpha) - i_\cap^\text{sx-}(t:\alpha)
\end{equation}
Now, each of these components can be shown to be non-negative and monotonically increasing over the lattice. Moreover, the induced informative and misinformative atoms $\pi^\text{sx+}$ and $\pi^\text{sx-}$ are non-negative as well \cite{makkeh2020isx}. In other words, once we seperate out informative and misinformative components any violations of non-negativity and monotonicity disappear. Hence, these violations can be fully accounted for in terms of misinformative contributions.

\section{Conclusion} \label{sec:conclusion}
In this paper we connected PID theory with ideas from mereology, i.e. the study of parthood relations, and formal logic. The main insights derived from these ideas are that the general structure of information decomposition as originally introduced by Williams and Beer~\cite{williams2010nonnegative} can be derived entirely from 1) parthood relations between information contributions and 2) in terms of a hierarchy of logical constraints on how information about the target can be accessed. In this way the theory is set up from the perspective of the atoms of information, i.e. the quantities we are ultimately interested in. The $n$-sources PID problem has conventionally been approached by defining a measure of redundant information which in turn implies a unique solution for the atoms of information. We showed how such a measure can be defined in terms of the information provided by logical statements of a specific form. We discussed furthermore how the parthood perspective can be utilized to systematically address the question of whether a PID may be determined based on concepts other than redundancy. In doing so, we showed that this is indeed possible in terms of measures of ``restricted information'', ``weak synergy'', and ``moderate synergy'' but not in terms of ``strong synergy''. We hope to have shown that there are deep connections between mereology, formal logic and information decomposition that future research in these fields may benefit from. 

\section{Appendix}

\subsection{Minimally Consistent PID}\label{app:min_cons_pid}
\begin{definition}[Minimally consistent PID] \label{def:mcpid_sum}
Let $S_1,\ldots,S_n, T$ be jointly distributed random variables with joint distribution $\mathbb{P}_J$ and let $\mathcal{B}_n$ be the set of parthood distributions in the context of n source variables. A minimally consistent partial-information-decomposition of the mutual information provided by the sources $S_1,\ldots,S_n$ about the target $T$ is any function $\Pi_{\mathbb{P}_j}:\mathcal{B}_n \rightarrow \mathbb{R}$, determined by $\mathbb{P}_J$, that satisfies
	\begin{equation}\label{eq:consistency_equation}
	I_{\mathbb{P}_J}(T:(S_i)_{i \in \mathbf{a}}) = \sum\limits_{f(\mathbf{a}) = 1} \Pi_{\mathbb{P}_J}(f)
	\end{equation}
	for all $\mathbf{a} \subseteq \{1,\ldots,n\}$. The subscripts $\mathbb{P}_J$ indicate that both the mutual information and the information atoms are functions of the underlying joint distribution.
\end{definition}

\subsection{Proof of isomorphism between $(\mathcal{B},\sqsubseteq)$, $(\mathcal{L},\leftmodels)$ and $(\mathcal{A},\preceq)$}
\label{app:proof_isomorph}
First, recall that the relation $\models$ of logical implication is formally defined in terms of the notion of a \textit{valuation} \cite{smullyan1995first}. A valuation is an assignment of truth-values (0 for false and 1 for true) over the propositional variables $\phi_i$. So the set of all valuations $\mathcal{V}$ is given by the set of all mappings from $\{\phi_1,\ldots,\phi_n\}$ into $\{0,1\}$: 
\begin{equation}
\mathcal{V} := \{0,1\}^{\{\phi_1,\ldots,\phi_n\}}
\end{equation}
A valuation is said to \textit{satisfy} a statement $\tilde{\alpha}$, written as $\models_V \tilde{\alpha}$, under the following conditions
\begin{enumerate}
	\item If $\tilde{\alpha}$ is an atomic statement, then ${\models_V \tilde{\alpha} \iff V(\tilde{\alpha})  = 1}$
	\item If $\tilde{\alpha}$ is of the form $\tilde{\beta} \wedge \tilde{\gamma}$, then ${\models_V \tilde{\alpha} \iff \models_V \tilde{\beta} \text{ and } \models_V \tilde{\gamma}}$
	\item If $\tilde{\alpha}$ is of the form $\tilde{\beta} \vee \tilde{\gamma}$, then ${\models_V \tilde{\alpha} \iff \models_V \tilde{\beta} \text{ or } \models_V \tilde{\gamma}}$
\end{enumerate}
In this way, the satisfaction relationship is inductively defined for all statements of the propositional language we are considering here. The relation of logical implication is now defined such that a statement $\tilde{\alpha}$ implies a statement $\tilde{\beta}$ just in case all valuations that satisfy $\tilde{\alpha}$ also satisfy $\tilde{\beta}$. Formally,
\begin{equation}
\tilde{\alpha} \models \tilde{\beta} \iff \forall V\in \mathcal{V}: \models_V \tilde{\alpha} \rightarrow \models_V \tilde{\beta}
\end{equation}

\begin{proof}[Proof of the theorem]
We first show the isomorphism between $(\mathcal{B},\sqsubseteq)$ and $(\mathcal{A},\preceq)$ and then the isomorphism between $(\mathcal{A},\preceq)$ and $(\mathcal{L},\leftmodels)$. The following mapping $\varphi: \mathcal{A} \rightarrow \mathcal{B}$ is an isomorphism between $(\mathcal{B},\sqsubseteq)$ and $(\mathcal{A},\preceq)$:
\begin{equation}
\varphi(\alpha) := f_\alpha \text{ with } f_\alpha(\mathbf{b}) = 
\begin{cases}
1 \text{ if } \exists \mathbf{a} \in \alpha: \mathbf{b} \supseteq \mathbf{a}  \\
0 \text{ otherwise}
\end{cases}
\end{equation}
First, $\varphi$ is surjective: let $f \in \mathcal{B}$, then $\varphi(\alpha_f) = f$ for the set $\alpha_f$ of minimal elements with value 1, i.e.
\begin{equation}
\alpha_f := \{\mathbf{a} \mid f(\mathbf{a}) = 1 \text{ \& } \neg \exists \mathbf{b} \subset \mathbf{a}: f(\mathbf{b})=1 \}
\end{equation}
$\varphi$ is also injective: let $\varphi(\alpha) = f_\alpha = f_\beta = \varphi(\beta)$ and let $\mathbf{b} \in \beta$. Then, $f_\beta(\mathbf{b}) = 1$ and hence $f_\alpha(\mathbf{b}) = 1$. Therefore, $\exists \mathbf{a} \in \alpha: \mathbf{b} \supseteq \mathbf{a}$. But this can only be true if $\mathbf{b} = \mathbf{a}$, because suppose $\mathbf{b} \supset \mathbf{a}$. We have $f_\beta(\mathbf{a})=1$ and hence $\exists \mathbf{b}^* \in \beta: \mathbf{a} \supseteq \mathbf{b}^*$. But then $\mathbf{b} \supset \mathbf{a} \supseteq \mathbf{b}^*$ while $\mathbf{b}, \mathbf{b}^* \in \beta$ contradicting the fact that $\beta$ is an antichain. Hence, $\mathbf{b} \in \alpha$. By the same argument it can be shown that any $\mathbf{a} \in \alpha$ has to be in $\beta$ and therefore $\alpha = \beta$.

It remains to be shown that $\varphi$ is structure preserving. So let $\alpha \preceq \beta$, i.e. $\forall \mathbf{b} \in \beta \exists \mathbf{a} \in \alpha: \mathbf{b} \supseteq \mathbf{a}$. We need to show that in this case $\varphi(\alpha) \sqsubseteq \varphi(\beta)$, i.e. $f_\beta(\mathbf{a}) = 1 \rightarrow f_\alpha(\mathbf{a})=1$. So let $f_\beta(\mathbf{a}) = 1$, then $\exists \mathbf{b} \in \beta: \mathbf{a} \supseteq \mathbf{b}$. By assumption this means that $\exists \mathbf{a}^* \in \alpha: \mathbf{b} \supseteq \mathbf{a}^*$. Hence $\mathbf{a} \supseteq \mathbf{a}^*$ and therefore $f_\alpha(\mathbf{a})=1$. Regarding the other direction suppose that $f \sqsubseteq g$. Now let $\mathbf{b} \in \beta_g = \varphi^{-1}(g)$, then $g(\mathbf{b}) = 1$ and hence $f(\mathbf{b}) = 1$. Therefore,  $\exists \mathbf{a} \in \alpha_f = \varphi^{-1}(f): \mathbf{b} \supseteq \mathbf{a}$, and thus, $\alpha_f \preceq \beta_g$.

We now turn to the isomorphism between $(\mathcal{L},\leftmodels)$ and $(\mathcal{A},\preceq)$. The mapping $\Psi : \mathcal{A} \rightarrow \mathcal{L}$ defined in the main text is an isomorphism. $\Psi$ is injective for let $\alpha, \beta \in \mathcal{A}$ be two distinct antichains. Then there has to be an $\mathbf{a} \in \alpha$ not contained in $\beta$ (or vice versa). But then the conjunction $\bigwedge_{i\in \mathbf{a}} \phi_i$ will appear in $\tilde{\alpha}$ while it does not appear in $\tilde{\beta}$. Accordingly, $\tilde{\alpha}$ and $\tilde{\beta}$ are distinct elements of $\mathcal{L}$. $\Psi$ is surjective as well for let $\tilde{\alpha} \in \mathcal{L}$. Then $\tilde{\alpha}$ is of the form $\bigvee_{\mathbf{j} \in \mathbf{J}} \bigwedge_{i\in \mathbf{j}} \phi_i$ for some set of index sets $\mathbf{J} = \{\mathbf{j}_1,\ldots,\mathbf{j}_m\}$ where $\mathbf{j}_i \subseteq \{1,\ldots,n\}$. Because the conjunctions $\bigwedge_{i\in \mathbf{j}} \phi_i$ have to be logically independent it follows that the index sets cannot be subsets of each other, i.e. $\neg (\mathbf{j}_k \supseteq \mathbf{j}_l)$ for $k\neq l $. But this implies that $\mathbf{J}$ is an antichain which is, by definition of $\Psi$, mapped onto $\tilde{\alpha}$. 
	
	It only remains to be shown that $\beta \preceq \alpha \iff \tilde{\beta} \leftmodels \tilde{\alpha}$. First, suppose that $\beta \preceq \alpha$. We need to show that for all valuations $V \in \mathcal{V} = \{0,1\}^{\{\phi_1,\ldots,\phi_n\}}$: $\models_V \tilde{\alpha} \rightarrow \models_V \tilde{\beta}$, i.e. all Boolean valuations of the $\phi_i$ that make $\tilde{\alpha}$ true, also make $\tilde{\beta}$ true. So suppose $\models_V \tilde{\alpha}$, then there must be an $a\in \alpha$ such that $\models_V \bigwedge_{i\in a} \phi_i$. But since $\beta \preceq \alpha$, there must be a $\mathbf{b}\in \beta$ such that $a \supseteq b$. Therefore, $\models_V \bigwedge_{i\in b} \phi_i$. Hence, V also satisfies the disjunction over all $\mathbf{b} \in \beta$: $\models_V \bigvee_{\mathbf{b}\in \beta}\bigwedge_{i\in b} \phi_i = \tilde{\beta}$. 
	
	Regarding the other direction, suppose that $\tilde{\beta} \leftmodels \tilde{\alpha}$, i.e. all valutions satisfying $\tilde{\alpha}$ also satisfy $\tilde{\beta}$. Now suppose for contradiction that $\neg (\beta \preceq \alpha)$, i.e. $\exists \mathbf{a}^* \in \alpha \forall \mathbf{b} \in \beta: \neg(\mathbf{a} \supseteq \mathbf{b})$. In this case, we can construct a valuation V that satisfies $\tilde{\alpha}$ but not $\tilde{\beta}$ in the following way:
	\begin{equation}
	V(\phi_i) =
	\begin{cases}
	1 &\text{if } i \in \mathbf{a}^* \\
	0 &\text{if } i \notin \mathbf{a}^*
	\end{cases}
	\end{equation}
	By construction all $b\in \beta$ contain at least one index i not contained in a. Therefore, V does not satisfy any of the conjunctions $\bigwedge_{i\in b} \phi_i$, and thus it does not satisfy $\tilde{\beta}$, in contradiction to the initial assumption. Hence, $\beta\preceq \alpha$, concluding the proof.
\end{proof}
\begin{cor}
$(\mathcal{L},\leftmodels)$ and $(\mathcal{B}, \sqsubseteq)$ are lattices.
\end{cor}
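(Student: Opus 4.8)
The plan is to exploit the fact that being a lattice (and a fortiori a poset) is an order-theoretic property invariant under order isomorphism, together with the isomorphism established in the Theorem. Hence it suffices to verify that a single one of the three isomorphic structures is a lattice, and I would choose the structure of parthood distributions $(\mathcal{B}, \sqsubseteq)$ since its order is simply the pointwise order on Boolean functions. Unfolding the definition of $\sqsubseteq$, one has $f \sqsubseteq g$ exactly when $f(\mathbf{a}) \le g(\mathbf{a})$ for every $\mathbf{a} \in \mathcal{P}(\{1,\ldots,n\})$ (with $0 < 1$); reflexivity, antisymmetry and transitivity are then inherited coordinatewise, so $(\mathcal{B}, \sqsubseteq)$ is a poset.

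For the lattice structure I would propose the pointwise operations as candidates for meet and join,
\begin{equation}
(f \meet g)(\mathbf{a}) = \min\bigl(f(\mathbf{a}), g(\mathbf{a})\bigr), \qquad (f \join g)(\mathbf{a}) = \max\bigl(f(\mathbf{a}), g(\mathbf{a})\bigr),
\end{equation}
i.e. the coordinatewise Boolean AND and OR. In the full Boolean lattice $\{0,1\}^{\mathcal{P}(\{1,\ldots,n\})}$ these are well known to be the infimum and supremum, so the only thing that genuinely needs checking is that $\mathcal{B}$ is \emph{closed} under them, i.e. that $f \meet g$ and $f \join g$ are again parthood distributions. This is the crux of the argument and amounts to verifying the three defining constraints. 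The boundary constraints are immediate: $(f \meet g)(\{\}) = (f \join g)(\{\}) = 0$ since $f(\{\}) = g(\{\}) = 0$, and likewise both operations return $1$ on the full set. Monotonicity is the only slightly delicate point: if $(f \meet g)(\mathbf{a}) = 1$ then $f(\mathbf{a}) = g(\mathbf{a}) = 1$, so for any $\mathbf{b} \supseteq \mathbf{a}$ monotonicity of $f$ and $g$ forces $f(\mathbf{b}) = g(\mathbf{b}) = 1$ and hence $(f \meet g)(\mathbf{b}) = 1$; the argument for $\join$ is identical with $\max$ in place of $\min$. Thus $\mathcal{B}$ is a sublattice of the Boolean lattice, and in particular $(\mathcal{B}, \sqsubseteq)$ is a lattice.

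Finally I would transport this conclusion along the isomorphism. The Theorem supplies an order isomorphism between $(\mathcal{B}, \sqsubseteq)$ and $(\mathcal{L}, \leftmodels)$, and any order isomorphism carries infima to infima and suprema to suprema. Since $(\mathcal{B}, \sqsubseteq)$ has all binary meets and joins, so does $(\mathcal{L}, \leftmodels)$, which is therefore a lattice as well; the same transport shows it is a poset. I expect the only real obstacle to be the closure verification in the second step — confirming that the pointwise minimum and maximum of two parthood distributions again satisfy monotonicity and the two boundary conditions — while the invariance of lattice structure under isomorphism is a standard order-theoretic fact that can be invoked directly.
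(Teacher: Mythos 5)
Your proof is correct, but it anchors the argument differently from the paper. The paper's proof is a one-liner: it invokes the isomorphism theorem and transports the lattice property from the antichain lattice $(\mathcal{A},\preceq)$, whose lattice structure is not proved in the paper but cited from Crampton and Loizou. You instead prove the base case yourself: you observe that $\sqsubseteq$ is the pointwise order on $\{0,1\}^{\mathcal{P}(\{1,\ldots,n\})}$ and verify that $\mathcal{B}$ is closed under pointwise $\min$ and $\max$ (the boundary conditions and monotonicity checks you give are exactly right), so that $(\mathcal{B},\sqsubseteq)$ is a sublattice of the full Boolean lattice, and only then transport to $(\mathcal{L},\leftmodels)$ via the isomorphism. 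What your route buys is self-containedness and extra information: no external reference is needed, you obtain explicit meet and join operations on $\mathcal{B}$ (coordinatewise AND and OR), and as a free by-product your argument shows the lattice is distributive, being a sublattice of a Boolean lattice --- a fact consistent with the paper's remark that the atoms are counted by Dedekind numbers, i.e.\ monotone Boolean functions. What the paper's route buys is brevity and the fact that the lattice property of $(\mathcal{A},\preceq)$ itself (which your argument only yields indirectly, via the isomorphism in the reverse direction) is covered by the citation. One small point of care in your last step: transporting posethood along the order isomorphism implicitly uses that a bijection preserving the relation in both directions carries antisymmetry across; this is exactly how the paper elsewhere concludes that $\leftmodels$ is antisymmetric on $\mathcal{L}$ (no two distinct logically equivalent statements lie in $\mathcal{L}$), so your appeal to it is legitimate, but it is worth stating explicitly since $(\mathcal{L},\leftmodels)$ is not known to be a poset before this corollary.
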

\begin{proof}
Follows from the isomorphism and the fact that $(\mathcal{A},\preceq)$ is a lattice as shown by \cite{Crampton2000}.
\end{proof}

\subsection{Proofs of Propositions }\label{app:proofs_of_props}
\subsubsection{Monotonic truth tables}
\begin{proof}[Proof of Proposition 1]
Let $\tilde{\alpha} \in \mathcal{L}$ and let $V,V^\prime \in \mathcal{V}$ such that $\forall i \in \{1,\ldots,n\}: V(\phi_i) = 1 \rightarrow V^\prime(\phi_i) = 1$. Suppose that $T_{\tilde{\alpha}}(V) = 1$. Then V must satisfy at least one of the conjunctions $\bigwedge_{i\in\mathbf{a}} \phi_i$. But since $V(\phi_i) = 1 \rightarrow V^\prime(\phi_i) = 1$ any conjunction satisfied by V must also be satisfied by $V^\prime$. Hence, $T_{\tilde{\alpha}}(V^\prime) = 1$. 
	
	Regarding the converse: let T be a monotonic truth-table. Then $T = T_{\tilde{\alpha}^*}$ for the statement
	\begin{equation}
	\tilde{\alpha}^* = \bigvee_{\substack{V \in \mathcal{V} \\ T(V)=1}} \bigwedge_{\substack{i \in \{1,\ldots,n\} \\ V(\phi_i) = 1}} \phi_i
	\end{equation} 
	Note that $\tilde{\alpha}^* $ is generally not in $\mathcal{L}$ because the conjunctions are not necessarily logically independent. But one can obtain an equivalent statement $\tilde{\alpha} \in \mathcal{L}$ by removing all conjunctions from $\tilde{\alpha}^*$ that logically imply another conjunction in $\tilde{\alpha}^*$. Let $\tilde{\alpha}$ be this statement. Then, if $\tilde{\alpha}$ is true, certainly $\tilde{\alpha}^*$ is true because the latter differs from the former only through additional disjuncts. Conversely, if $\tilde{\alpha}^*$ is true, then one of its conjuncts must be true. If the true conjunct in $\tilde{\alpha}^*$ does appear in $\tilde{\alpha}$ as well (i.e. it has not been removed), then trivially $\tilde{\alpha}$ has to be true as well. On the other hand, if this conjunct does not appear in $\tilde{\alpha}$, then it must have been removed which implies that there is a logically weaker conjunct in $\tilde{\alpha}$. But then this logically weaker conjunct has to be true as well, thereby making $\tilde{\alpha}$ true.  Therefore, $\tilde{\alpha}^*$ and $\tilde{\alpha}$ have the same truth-table T and $\tilde{\alpha} \in \mathcal{L}$ as desired. Furthermore, $\tilde{\alpha}$ is unique because $\models$ is antisymmetric on $\mathcal{L}$ by Corollary 1. Hence, there can be no two distinct but logically equivalent elements (i.e. elements with the same truth-table) in $\mathcal{L}$.
\end{proof}

\subsubsection{Characterization of Children}
\begin{proof}[Proof of Proposition 2]
Concerning the if-part we show the contraposition: Suppose that there is a $\tilde{\beta}$ strictly in between $\tilde{\gamma}$ and $\tilde{\alpha}$. If this is the case, then there must be a model $V_1$ such that $T_{\tilde{\beta}}(V_1) = 1$ while $T_{\tilde{\alpha}}(V_1) = 0$ and a distinct model $V_2$ such that $T_{\tilde{\gamma}}(V_2) = 1$ while $T_{\tilde{\beta}}(V_2) = 0$. But for both of these models it would be true that $T_{\tilde{\gamma}}(V_1) = 1$ while $T_{\tilde{\alpha}}(V_1) = 0$. Thus, $\tilde{\gamma}$ would be true in at least two additional cases.
	
	Concerning the only-if part we show the contraposition again: Suppose that $\tilde{\gamma}$ is true in the $k\geq 2$ additional cases contained in $ \mathcal{V}_* = \{V_1, V_2,\ldots,V_k\}$.  Consider the subset of these models with the smallest number of ones:
	\begin{equation}
	\mathcal{V}_*^{min} = \left\{V \in \mathcal{V}_* \hspace{0.2cm}|\hspace{0.2cm} \forall V^\prime \in \mathcal{V}_*: \sum_{i=1}^{n} V(\phi_i) \leq \sum_{i=1}^{n} V^\prime(\phi_i)   \right\}
	\end{equation}
	Now let $V_* \in \mathcal{V}_*^{min}$. Then the truth table
	\begin{equation}
	T_{\tilde{\beta}}(V) := 
	\begin{cases}
	1 & \text{if } T_{\tilde{\gamma}}(V) = 1 \text{ but } V \neq V_*\\
	0 & \text{otherwise}
	\end{cases}
	\end{equation}
	is monotonic and the statement $\tilde{\beta}$ associated with this truth-table is strictly in between $\tilde{\gamma}$ and $\tilde{\alpha}$. The latter is true because all valuations that satisfy $\tilde{\alpha}$ also satisfy $\tilde{\beta}$ and all valuations that satisfy $\tilde{\beta}$ also satisfy $\tilde{\gamma}$. At the same time there is a valuation, namely $V_*$, that satisfies $\tilde{\gamma}$ but not $\tilde{\beta}$, and a set of valuations with at least one element, namely $\mathcal{V}_* \backslash \{V_*\}$, that satisfies $\tilde{\beta}$ but not $\tilde{\alpha}$. Thus, all three statements have to be distinct. Regarding the monotonicity: by assumption $\tilde{\gamma}$ has a monotonic truth-table and the truth-table of $\tilde{\beta}$  is identical except that $T_{\tilde{\beta}}(V_*) = 0$. So the only way $T_{\tilde{\beta}}$ could \textit{not} be monotonic would be for there to exist a valuation $V_*^\prime$, distinct from $V_*$, that would enforce $T_{\tilde{\beta}}(V_*) = 1$ via monotonicity, i.e. a valuation that results from flipping some ones in $V_*$ to zeros and that satisfies $\tilde{\beta}$. Suppose there is such a valuation. $V_*^\prime$ would have to satisfy $\tilde{\beta}$ while not satisfying $\tilde{\alpha}$, since if it did satisfy $\tilde{\alpha}$, $V_*$ would have to satisfy  $\tilde{\alpha}$ as well in contradiction to $V_* \in \mathcal{V}_*$. Furthermore, as $V_*^\prime$ satisfies $\tilde{\beta}$ it also satisfies $\tilde{\gamma}$. Therefore,  $V_*^\prime \in \mathcal{V}_*$. However, if it were true that $V_*^\prime(\phi_i) = 1\rightarrow V_*(\phi_i) = 1$, then $\sum_{i=1}^{n} V_*^\prime(\phi_i) < \sum_{i=1}^{n} V_*(\phi_i)$, contradicting the fact that $V_* \in \mathcal{V}_*^{min}.$ 
\end{proof}
\subsubsection{Lower bound on children}
\begin{proof}[Proof of Proposition 3]
	Let $\alpha$ be such an antichain and let $\mathbf{a} \in \alpha$ be a set of indices such that $|\mathbf{a}|= k $. We utilize the isomorphism between $\mathcal{A}$ and $\mathcal{L}$ by showing that $\tilde{\alpha}$ has at least k children. 
	Since $|\mathbf{a}| = k$ there are exactly k distinct indices $i_1,\ldots,i_k \in \mathbf{a}$ and we can define k subsets of valuations
	\begin{align}
	\mathcal{V}_1 &= \{V\in \mathcal{V}: \neg(\models_V \tilde{\alpha}) \text{ \& } i \in \mathbf{a} \backslash \{i_1\} \rightarrow V(\phi_i) = 1  \} \\
	&\dots \notag \\
	\mathcal{V}_k &= \{V\in \mathcal{V}: \neg(\models_V \tilde{\alpha}) \text{ \& } i \in \mathbf{a} \backslash \{i_k\} \rightarrow V(\phi_i) = 1  \}
	\end{align} 
	In other words, the valuations in $\mathcal{V}_1$, first, do not satisfy $\tilde{\alpha}$, and second,  assign a one to all $\phi_i$ if $i$ is in the collection $\mathbf{a}$ but not equal to $i_1$. The definition of the other $\mathcal{V}_i$ is analogous. The goal is now to find 'maximal' valuations (making as many $\phi_i$ true as possible) in these sets and modify the truth-table of $\tilde{\alpha}$ by assigning a one to exactly one of these valuations. This can be done for all valuations separately to obtain k novel monotonic truth-tables. These monotonic truth-tables are uniquely associated with specific statements via Proposition 1 which can then be shown to be children by Proposition 2 since they are true in exactly one more case than $\tilde{\alpha}$. 
	To make this argument note first that $\mathcal{V}_1, \ldots,\mathcal{V}_k$ each contain at least one element $V_1, \ldots,V_k$ respectively:
	\begin{align}
	V_1(\phi_i) &=
	\begin{cases}
	1 & \text{if } i \in \mathbf{a} \backslash  \{i_1\} \\
	0 & \text{otherwise}
	\end{cases} \\
	&\dots \notag\\
	V_k(\phi_i) &=
	\begin{cases}
	1 & \text{if } i \in \mathbf{a} \backslash  \{i_k\} \\
	0 & \text{otherwise}
	\end{cases}
	\end{align}
	These valuations do not satisfy $\tilde{\alpha}$: They don't satisfy the conjunction $\bigwedge_{i\in \mathbf{a}} \phi_i$ and since $\alpha$ is an antichain each  $\mathbf{a}^\prime \neq \mathbf{a}$  has to contain at least one index j not contained in $\mathbf{a}$. The corresponding conjunctions $\bigwedge_{i\in \mathbf{a}^\prime} \phi_i = \phi_j \wedge \bigwedge_{i\in \mathbf{a}^\prime \backslash \{j\}} \phi_i$ are therefore not satisfied by any $V_i$ since by construction $V_1(\phi_j) = \ldots = V_k(\phi_j) = 0$. Now consider the sets of 'maximal' valuations within the $\mathcal{V}_i$:
	\begin{align}
	\mathcal{V}_1^{max} &= \left\{ V\in \mathcal{V}_1|  \forall V^\prime \in \mathcal{V}_1: \sum_{i=1}^n V^\prime(\phi_i) \leq \sum_{i=1}^n V(\phi_i)  \right\} \\
	&\dots \notag \\
	\mathcal{V}_k^{max} &= \left\{V\in \mathcal{V}_k|  \forall V^\prime \in \mathcal{V}_k: \sum_{i=1}^n V^\prime(\phi_i) \leq \sum_{i=1}^n V(\phi_i) \right\}
	\end{align} 
	Let $V_1^* \in \mathcal{V}_1^{max}, ..., V_k^* \in \mathcal{V}_k^{max} $. Due to the maximality of these valuations the following truth-tables are monotonic
	\begin{align}
	T_{\tilde{\gamma}_1}(V) &= 
	\begin{cases}
	1 & \text{if } T_{\tilde{\alpha}}(V) = 1 \text{ or } V = V_1^* \\
	0 & \text{otherwise}
	\end{cases} 
	\\
	&\dots \notag\\
	T_{\tilde{\gamma}_k}(V) &= 
	\begin{cases}
	1 & \text{if } T_{\tilde{\alpha}}(V) = 1 \text{ or } V = V_k^* \\
	0 & \text{otherwise}
	\end{cases}
	\end{align}
	This is because, first, the truth-table of $\tilde{\alpha}$ is already monotonic, and second, if a zero is flipped to a one in $V_1^*$ or $\dots$ or $M_k^*$ the resulting valuations are by construction guaranteed to satisfy $\tilde{\alpha}$. Otherwise, we would obtain valuations in $\mathcal{V}_1$ or $\dots$ or $\mathcal{V}_k$ respectively, containing more ones than $V_1^*$ or $\dots$ or $V_k^*$ respectively, in contradiction to the maximality of these valuations. The uniquely defined statements $\tilde{\gamma}_1,\ldots, \tilde{\gamma}_k$ corresponding to these truth-tables via Proposition 1 are children of $\tilde{\alpha}$ by Proposition 2 because each of them is true in exactly one additional valuation compared to $\tilde{\alpha}$. Finally all of these statements are distinct since they are pairwise logically independent and a single statement cannot have multiple truth-tables.
\end{proof}
\subsubsection{Algorithm to determine children}
\begin{proof}[Proof of Proposition 4]
	Firstly, any $\tilde{\gamma}$ produced by the algorithm is a direct child since its truth-table differs from that of $\tilde{\alpha}$ only through an additional one, i.e. $\tilde{\gamma}$ is true in exactly one more case than $\tilde{\alpha}$ and is thus a direct child by Proposition 2. Secondly, there is no child of $\tilde{\alpha}$ that is not generated by the algorithm. Again by Proposition 2, the truth-table of any such child would differ from that of $\tilde{\alpha}$ only through a single one. But the algorithm explores systematically \textit{all} possibilities to add a single one to the truth-table of $\tilde{\alpha}$. Thus any child $\tilde{\gamma}$ will be generated at some point.
\end{proof}
A pseudocode version of the algorithm is shown in Algorithm \ref{algo:children}.

\begin{algorithm}\label{algo:children}
\SetKwInOut{Input}{inputs}
\SetKwInOut{Output}{outputs}
\SetKwProg{GetChld}{GetChld}{}{}
\GetChld{$\tilde{\alpha}$}{
    \Input{A statement $\tilde{\alpha}$}
    \Output{The set of children of $\tilde{\alpha}$ denoted by $\mathcal{C}_{\tilde{\alpha}}$}
    $k\gets 0$\;
    $\mathcal V_{\tilde{\alpha}} \gets \emptyset$\;
    $\mathcal{C}_{\tilde{\alpha}}\gets\emptyset$\;
    \tcp{step (1)}
    \ForEach{valuation $V\in\mathcal{V}$}{%
        \If{$\not\models_V \tilde{\alpha}$}{
            $\mathcal V_{\tilde{\alpha}}\gets \mathcal V_{\tilde{\alpha}}\cup V$\;
            \tcp{Maximal number of ones in $V$ if $\not\models_V \tilde{\alpha}$}
            \If{$\sum_i V_i>k$}{
            $k\gets \sum_i V_i$\;
            }
        }
    }
    \tcp{step (3) as a while loop}
    \While{$k\neq 0$}{
        \tcp{Construct the set of all $V\in\mathcal V_{\tilde{\alpha}}$ such that $\sum_i V_i=k$}
        $\mathcal V_{\tilde{\alpha}}^k \gets \emptyset$\;
        \ForEach{valuation $V\in\mathcal{V}_{\tilde\alpha}$}{%
            \If{$\sum_i V_i = k$}{
               $\mathcal V_{\tilde{\alpha}}^k \gets V$\; 
            }
        }
        \tcp{Construct a child of $\tilde{\alpha}$ if it exists (step (2))}
        \ForEach{valuation $V\in\mathcal V_{\tilde{\alpha}}^k$}{%
            $Q \gets \emptyset$\;
            \For{$V'\in\mathcal V_{\tilde{\alpha}}$}{%
                \If{$\sum_i V'_i = k+1$ and $V(\phi_i) = 1\rightarrow V'(\phi_i) = 1~\forall i \in [n]$}{%
                $Q \gets V'$\;
                break\;
                }
            }
            \If{$Q=\emptyset$}{%
                    construct $\tilde\gamma$ that satisfies $V$ and every $V'\in\mathcal V\backslash\mathcal V_{\tilde{\alpha}}$\; 
                    $\mathcal{C}_{\tilde{\alpha}}\gets\mathcal{C}_{\tilde{\alpha}}\cup\tilde\beta$
                }
        }
        $k \gets k - 1$
    }
    \Return{$\mathcal{C}_{\tilde{\alpha}}$}
 }
\caption{Determines children of a statement $\tilde{\alpha}$ in the logic lattice.}
\end{algorithm}

\subsubsection{Meet and Join operations on logic lattices}
The meet $\tilde{\wedge}$ and join $\tilde{\vee}$ operations can be explicitly constructed in the following way: The element of $\mathcal{L}$ logically equivalent to the disjunction $\ta \vee \tb$ can be obtained by simply removing all disjuncts that logically imply another disjunct. The element of $\mathcal{L}$ logically equivalent to the conjunction $\ta \wedge \tb$ can be obtained by, first, applying the distributive law to obtain a disjunction of conjunctions, second, applying the idempotency law to all conjunctions to remove repeated statements, and third, removing again all disjuncts that logically imply another disjunct. Denoting these three operations by $\mathcal{D}$, $\mathcal{I}$, and $\underline{\hspace{1mm}\circ\hspace{1mm}}$ (underline) respectively, the meet and join have the explicit expressions given in the following proposition:
\begin{prop}[Meet and Join Operations]
\begin{align}
\ta \meet \tb &= \underline{\ta \vee \tb} \\
\ta \join \tb &= \underline{\mathcal{I}(\mathcal{D}(\ta \wedge \tb))}
\end{align}
\end{prop}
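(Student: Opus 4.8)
The plan is to reduce the claim to two elementary universal properties of the lattice $(\mathcal{L},\leftmodels)$ and then verify that the displayed reduction operations carry the relevant statement into $\mathcal{L}$ without altering its truth-table. Recall that the order is $\ta\leftmodels\tb \Leftrightarrow \tb\models\ta$, so that the meet $\ta\meet\tb$ is the greatest lower bound --- the strongest statement implied by both $\ta$ and $\tb$ --- and the join $\ta\join\tb$ is the least upper bound --- the weakest statement implying both. Corollary~\ref{cor:pos_lat} already guarantees that these two extrema exist and, by the antisymmetry of $\models$ on $\mathcal{L}$, are unique; hence it suffices to exhibit one element of $\mathcal{L}$ realising each. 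I claim $\ta\vee\tb$ realises the meet and $\ta\wedge\tb$ realises the join, up to logical equivalence (which is exactly what the informal remarks preceding the proposition assert).

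First I would establish the meet. Using the equivalence $A\vee B\models C \Leftrightarrow (A\models C \text{ and } B\models C)$ proved earlier, taking $C=\ta$ and $C=\tb$ gives $\ta\models\ta\vee\tb$ and $\tb\models\ta\vee\tb$, so $\ta\vee\tb$ is a lower bound; and for any lower bound $M$ (meaning $\ta\models M$ and $\tb\models M$) the same equivalence yields $\ta\vee\tb\models M$, so $\ta\vee\tb$ is the \emph{greatest} lower bound. It then remains to show $\underline{\ta\vee\tb}\in\mathcal{L}$ and $\underline{\ta\vee\tb}\equiv\ta\vee\tb$. Equivalence is immediate, since deleting a disjunct that implies another disjunct never changes the truth value of a disjunction. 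Membership follows from the dictionary $\bigwedge_{i\in\bfa}\phi_i\models\bigwedge_{i\in\mathbf{b}}\phi_i \Leftrightarrow \mathbf{b}\subseteq\bfa$ for conjunctions of distinct variables: deleting every conjunction whose index set contains another's leaves exactly an antichain of index sets, which is the defining syntactic form of $\mathcal{L}$.

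The join is dual. Writing $\ta=\bigvee_j C_j$ and $\tb=\bigvee_k D_k$ and using the elementary fact $C\models\ta\wedge\tb \Leftrightarrow (C\models\ta \text{ and } C\models\tb)$, the conjunction $\ta\wedge\tb$ implies both $\ta$ and $\tb$, and every upper bound $U$ satisfies $U\models\ta\wedge\tb$; hence $\ta\wedge\tb$ is the weakest statement stronger than both, i.e. the least upper bound. To land inside $\mathcal{L}$ I would follow the three operations in turn: $\mathcal{D}$ distributes $\wedge$ over $\vee$ to rewrite the conjunction as $\bigvee_{j,k}(C_j\wedge D_k)$; $\mathcal{I}$ collapses each $C_j\wedge D_k$ via $\phi_i\wedge\phi_i\equiv\phi_i$ into a single conjunction over the union of the two index sets, now over pairwise distinct variables; and $\underline{\;\cdot\;}$ once more deletes the non-minimal conjunctions to restore the antichain property. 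Each step preserves logical equivalence and the result has the required shape, so $\underline{\mathcal{I}(\mathcal{D}(\ta\wedge\tb))}$ is the unique element of $\mathcal{L}$ equivalent to $\ta\wedge\tb$, namely the join.

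The step I expect to require the most care is confirming that the reduced expressions genuinely belong to $\mathcal{L}$ rather than merely being equivalent to some member of it; this hinges on the subset/implication dictionary above and on checking that once all disjuncts implying another have been removed, no survivor implies any other, so that the surviving index sets form a genuine antichain. A minor edge case worth flagging is that of \emph{identical} conjunctions arising in $\ta\vee\tb$ or among the $C_j\wedge D_k$: since each implies the other, the deletion operation must be read as retaining a single representative, which is harmless because equal index sets yield literally the same conjunction. If one prefers to avoid this syntactic bookkeeping, Proposition~\ref{prop:mon_tt} supplies existence and uniqueness of the equivalent $\mathcal{L}$-element in one stroke, since $\ta\vee\tb$ and $\ta\wedge\tb$ inherit monotonic truth-tables from $\ta$ and $\tb$; the reduction operations then merely name that element explicitly.
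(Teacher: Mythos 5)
Your proposal is correct and takes essentially the same route as the paper: both arguments reduce the claim to showing that $\ta \vee \tb$ and $\ta \wedge \tb$ are, up to logical equivalence, the strongest statement weaker than both and the weakest statement stronger than both, and that the operations $\underline{\;\cdot\;}$, $\mathcal{I}$, $\mathcal{D}$ preserve truth-conditions while landing in $\mathcal{L}$ (the paper checks extremality by a valuation-based contradiction, you via the implication equivalences, and you spell out the $\mathcal{L}$-membership that the paper asserts ``by construction''---cosmetic differences only). One trivial slip worth fixing: to extract $\ta \models \ta\vee\tb$ and $\tb \models \ta\vee\tb$ from the equivalence $A\vee B\models C \Leftrightarrow (A\models C \text{ and } B\models C)$ you must instantiate $C=\ta\vee\tb$, not $C=\ta$ and $C=\tb$; the facts themselves are of course immediate from the semantics of disjunction.
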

\begin{proof}
By construction, $\underline{\ta \vee \tb}$ and $\underline{\mathcal{I}(\mathcal{D}(\ta \wedge \tb))}$ are in $\mathcal{L}$. Furthermore, since the operations $\mathcal{D}$, $\mathcal{I}$, and $\underline{\circ}$ do not affect the truth-conditions of statements,  $\underline{\ta \vee \tb}$ and $\underline{\mathcal{I}(\mathcal{D}(\ta \wedge \tb))}$ are logically equivalent to $\ta \vee \tb$ and $\ta \wedge \tb$, respectively. Hence, it only needs to be shown that these latter statements satisfy the conditions of meet and join respectively. Now, clearly $\ta \vee \tb$ is logically weaker than both $\ta$ and $\tb$ while $\ta \wedge \tb$ is logically stronger than both $\ta$ and $\tb$. It remains to be shown that former is the strongest such statement while the latter is the weakest such statement. Suppose there was statement $\tilde{\gamma}$ stronger than $\ta \vee \tb$, then there would have to be a model $M^*$ making $\tc$ false and  $\underline{\ta \vee \tb}$ true. But since $\underline{\ta \vee \tb}$ is true whenever either $\ta$ is true or $\tb$ is true, this means that $\tc$ would have to be false in a case where one of $\ta$ or $\tb$ is true. However, this implies that $\tc$ cannot be logically weaker than both $\ta$ and $\tb$, and hence, $\underline{\ta \vee \tb}$  must be the strongest statement logically weaker than $\ta$ and $\tb$. Now suppose there was a statement $\tilde{\gamma}$ weaker than $\ta \wedge \tb$, then there would have to be a model $M^*$ making $\tc$ true but $\ta \wedge \tb$ false. But this means that $\tc$ would be true in a case in which either $\ta$ or $\tb$ is false. Accordingly, $\tc$ cannot be stronger than both $\ta$ and $\tb$, and hence, $\underline{\mathcal{I}(\mathcal{D}(\ta \wedge \tb))}$ must be the weakest statement logically stronger than $\ta$ and $\tb$.
\end{proof}

\subsection{Derivations related to restricted information based and synergy based PID} \label{app:restr_info}

\subsubsection{Relation between restricted information and conditional mutual information}

The relation between restricted information and conditional mutual information given by Equation 5.5 can be derived via the chain rule as follows:
{\scriptsize
\begin{align}
I\left(T:(S_i)_{i\in \alpha_\cup} | (S_j)_{j \in \alpha_\cup^C}\right) \!&=\! I(T :(S_i)_{i\in [n]}) - I(T:(S_j)_{j \in \alpha_\cup^C})  \\
&= \sum\limits_{f([n]) = 1} \Pi(f) - \sum\limits_{f(\alpha_\cup^C) = 1} \Pi(f) \\
&= \sum\limits_{f(\alpha_\cup^C) = 0} \Pi(f) \\
&= \sum\limits_{f(\mathbf{b}) = 1 \rightarrow \exists j: \{i_j\} \supseteq \mathbf{b} }  \Pi(f) \\
&= I_\text{res}(T:\alpha)
\end{align}
}

\subsubsection{Proof that moderate synergy induces a unique PID}\label{app:moderate_synergy_proof}
The claim that defining a measure of moderate synergy leads to a unique solution for the atoms of information can be shown by starting from the system of equation associated with weak synergy. These equations can be transformed into the moderate synergy equations by operations that preserve invertibility. First, the ``self-synergy'' equations
\begin{equation}
I_\text{ws/ms}(T:\bfa) = I(T:\bfa^C|\bfa) = \sum\limits_{f(\bfa)=0} \Pi(f)
\end{equation}
are contained in both systems. Furthermore, weak and moderate synergy coincide if $\alpha_\cup = [n]$. In this case, the additional constraint $f(\alpha_\cup) = 1$ is superfluous since $f([n])$ is necessarily equal to 1 by the properties of parthood distributions. Thus, the corresponding equations are again contained in both systems. This only leaves the case of $\alpha_\cup \subset [n]$ while $|\alpha| \geq 2$. Let $\alpha$ be such an antichain. It can be shown that the corresponding moderate synergies can be expressed as a difference between two equations in the weak synergy system:
\begin{align}
I_\text{ws}(T:\alpha) -I_\text{ms}(T:\alpha) &= \sum_{\substack{\forall \bfa_i: f(\bfa_i)=0 \\ f(\alpha_\cup)=0}} \Pi(f) \\
&= \sum_{f(\alpha_\cup)= 0} \Pi(f) \\
&= I(T:\alpha_\cup^C|\alpha_\cup)
\end{align}
where the second to last equality follows because the monotonicity of parthood distributions implies that $f(\alpha_\cup) = 0 \rightarrow f(\mathbf{a}) = 0~\forall \bfa \in \alpha$. Therefore, we obtain
\begin{align}
I_\text{ms}(T:\alpha) &= I_\text{ws}(T:\alpha) - I(T:\alpha_\cup^C|\alpha_\cup) \\
&= I_\text{ws}(T:\alpha) - I_\text{ws}(T:\alpha_\cup)
\end{align}
showing that the moderate synergy equation associated with $\alpha$ is the difference between two weak synergy equations. Since subtracting two equations from each other leaves invertibility unaffected this establishes that the moderate synergy system of equations is invertible as well. 
 
\vskip6pt

\enlargethispage{20pt}



\aucontribute{AG conceived the parthood-based and logic-based formulations of PID and wrote the original manuscript except for the introduction which was provided by MW. MW originally conceived the $i_\cap^{sx}$ measure of redundant information rederived in \S\ref{sec:logic_isx}. AM provided critical feedback regarding the mathematical aspects of the paper. All authors were involved in revising the manuscript and refining the ideas presented therein. All authors gave final approval for publication and agree to be held accountable for the work performed therein.}

\competing{We declare we have no competing interests.}

\funding{MW, AM, AG are employed at the Campus Institute for Dynamics of Biological Networks (CIDBN) funded by the Volkswagen Stiftung. MW, AM received support from the Volkswagenstiftung under the programme 'Big Data in den Lebenswissenschaften'. This work was supported by a funding from the Ministry for Science and Education of Lower Saxony and the Volkswagen Foundation through the ``Nieders\"{a}chsisches Vorab''.  }

\ack{We thank Kyle Schick-Poland, David Ehrlich, and Andreas Schneider for helpful comments on the draft.}


\appendix


\end{document}